\def\eqref#1{equation~\ref{#1}}
\def\1{\bm{1}}
\DeclareMathAlphabet{\mathsfit}{\encodingdefault}{\sfdefault}{m}{sl}
\SetMathAlphabet{\mathsfit}{bold}{\encodingdefault}{\sfdefault}{bx}{n}
\newcommand{\R}{\mathbb{R}}
\newcommand{\cmark}{\ding{51}}%
\newcommand{\xmark}{\ding{55}}%
\title{Transformers trained on proteins can learn to attend to \\ Euclidean distance}
\author{\name Isaac Ellmen \\
      \addr Department of Statistics\\
      University of Oxford
      \AND
      \name Constantin Schneider \\
      \addr Exscientia
      \AND
      \name Matthew I.J. Raybould \\
      \addr Department of Statistics\\
      University of Oxford
      \AND
      \name Charlotte M. Deane \email{deane@stats.ox.ac.uk} \\
      \addr Department of Statistics\\
      University of Oxford
}
\newtheorem{theorem}{Theorem}[section]
\newtheorem{lemma}[theorem]{Lemma}
\begin{document}

\maketitle

\begin{abstract}
While conventional Transformers generally operate on sequence data, they can be used in conjunction with structure models, typically SE(3)-invariant or equivariant graph neural networks (GNNs), for 3D applications such as protein structure modelling.
These hybrids typically involve either (1) preprocessing/tokenizing structural features as input for Transformers or (2) taking Transformer embeddings and processing them within a structural representation.
However, there is evidence that Transformers can learn to process structural information on their own, such as the AlphaFold3 structural diffusion model.
In this work we show that Transformers can function independently as structure models when passed linear embeddings of coordinates.
We first provide a theoretical explanation for how Transformers can learn to filter attention as a 3D Gaussian with learned variance.
We then validate this theory using both simulated 3D points and in the context of masked token prediction for proteins.
Finally, we show that pre-training protein Transformer encoders with structure improves performance on a downstream task, yielding better performance than custom structural models.
Together, this work provides a basis for using standard Transformers as hybrid structure-language models.
\end{abstract}

\section{Introduction}
\label{sec:theory}

\subsection{Background}

Transformers typically operate on sequential data, however many applications of Transformers benefit from an ability to learn geometric reasoning.
For instance, ESM-2 \citep{lin_evolutionary-scale_2023} demonstrates that in order to effectively predict masked tokens in protein \emph{sequences}, the model has learned some ability to predict protein \emph{structures}.
Other tasks such as image processing or even natural language processing may benefit from an internal representation of objects in 3D space.
However, it is unclear how Transformers can learn to use 3D representations to perform spatial reasoning.
To this end, custom structural Transformers have been created which model data as graphs and represent distance between nodes as edge features in order to perform SE(3)-invariant attention \citep{ingraham_generative_2019, fuchs_se3-transformers_2020, liao_equiformer_2023, liao_equiformerv2_2023}.

SE(3) invariance means that all functions of coordinates reduce to functions of relative distance.
That is, for every function $f$ of two coordinates $x_1$ and $x_2$, there exists an equivalent function $g$ which depends only on their relative distance: $f(\vec{x_1}, \vec{x_2}) = g(|\vec{x_1}-\vec{x_2}|)$.
Since relative distance in Euclidean space is defined as $|\vec{x}| = \sqrt{x^2+y^2+z^2}$, it may be easier to learn functions of the square of the relative distance — a linear combination of functions of the individual coordinates.

In this manuscript, we investigate how conventional Transformers can  learn to approximate functions of the squared distance between points, thereby learning an approximately SE(3)-invariant measure of distance.
In short, we show that ``out of the box'' Transformers can act as 3D structural models. Our main contributions are (1) to provide a theoretical explanation for how standard Transformers can learn to measure distance and perform structural reasoning, (2) to show that Transformers indeed learn Gaussian functions of distance and investigate efficient data augmentation methods which can be used to learn SE(3), and (3) to train a protein masked token prediction model with coordinates and show that finetuning it for function prediction yields a model which outperforms structural GNNs.


\subsection{Prior work}

There have been prior approaches to merge Transformers with SE(3)-(in/equi)variant models, especially for computational chemistry and 3D point clouds.
Some methods add attention blocks to SE(3) GNNs to create SE(3)-invariant GNN Transformers \citep{fuchs_se3-transformers_2020, liao_equiformer_2023}.
These have shown good results on a number of tasks, however tend to be memory-intensive, particularly because attention is performed on edges, which grow as $n^2$ for fully-connected graphs.
As a result, the graph connectedness of the GNNs is typically limited to k-nearest neighbours.
In contrast, memory-efficient attention implementations such as FlashAttention \citep{dao_flashattention_2022, dao_flashattention-2_2023} have enabled linear-memory standard Transformers.

Previous works have demonstrated that sequence-only protein Transformers can learn attention maps which correlate with physical contacts \citep{lin_evolutionary-scale_2023, vig_bertology_2020}.
However, these works do not formally model structure and so are limited by which contacts can be predicted from purely sequential patterns.
To overcome this, Transformers are often paired with structural models, for instance methods such as ProSST \citep{li_prosst_2024}, ESM-IF \citep{hsu_learning_2022}, and ESM3 \citep{hayes_simulating_2024} use custom graph-based modules to create structural tokens which are fed into standard Transformers.
In contrast, our work shows that standard Transformers are natively capable of using coordinates to model structure and measure distance.

Similarly, AlphaFold2 \citep{jumper_highly_2021} and ESMFold \citep{lin_evolutionary-scale_2023} use Transformers to preprocess protein sequences for structure prediction.
Again, these preprocessed representations have been shown to correlate with structural contacts.
AlphaFold2 makes this explicit during training by minimizing a distrogram loss which encourages the EvoFormer to learn structural contacts.
However, it is unclear if these representations are learning to explicitly embed coordinates in 3D and both models still require an SE(3)-equivariant GNN structure module to actually produce 3D structures.

In building AlphaFold3, DeepMind replaced AlphaFold2's SE(3)-equivariant structure module with linearly embedded coordinates fed into a diffusion transformer \citep{abramson_accurate_2024}.
AlphaFold3's structure module uses inner product attention with a pair bias learned from the pair representation.
At present, this still requires quadratic memory, however does indicate that nearly-standard Transformers with linearly embedded coordinates can learn on structure.
Here, we explore how such linearly embedded coordinates can be used by standard Transformer attention modules to measure the Euclidean distance between tokens, and, in contrast to prior work, show that no modifications are necessary for the standard Transformer architecture to learn to perform structural reasoning.

\section{Theory}
\label{sec:theory}

Here we introduce our theory for how Transformers can learn to measure Euclidean distance.
Throughout this section, we assume that all coordinates are small.
This can be achieved by scaling the inputs and outputs manually or via learned weights in the linear input/output maps.
For more detail, see Appendix \ref{sec:rescaling}.

\subsection{Gaussian spatial attention}

Consider a pre-norm Transformer \citep{xiong_layer_2020-1} operating on a sequence of embedded positions, $E(\vec{x_i})$.
For simplicity, assume that the key and query embeddings for all heads are trivial ($Q=K=I_d$).
Such mappings can easily be learned as long as the head dimension is at least $d$.
Then, in the first layer of the Transformer, the attention matrix for all heads will be:

\begin{equation}
\begin{split}
    A_{i,j} = SM(\frac{LN(E(\vec{x_i})){\cdot}LN(E(\vec{x_j}))}{\sqrt{d}})
\label{simple_attention}
\end{split}
\end{equation}

Where SM denotes the softmax function and LN denotes LayerNorm \citep{ba_layer_2016}.

Our objective is to determine an embedding, $E$, such that the attention $A_{i,j}$ is a monotone decreasing function of the Euclidean distance between $\vec{x_i}$ and $\vec{x_j}$.
In particular, if we choose $E$ such that, for some $a,b \in \R$:

\begin{equation}
\begin{split}
    LN(E(\vec{x_i})) \cdot LN(E(\vec{x_j})) \approx -a|\vec{x_i}-\vec{x_j}|^2 + b
    \label{eq:ln_approx}
\end{split}
\end{equation}

Then,

\begin{equation}
\begin{split}
    A_{i,j} & = SM(\frac{LN(E(\vec{x_i})){\cdot}LN(E(\vec{x_j}))}{\sqrt{d}}) \\
    & \approx SM(-\frac{a}{\sqrt{d}}|\vec{x_i}-\vec{x_j}|^2 + \frac{b}{\sqrt{d}}) \\
    & = SM(-\frac{a}{\sqrt{d}}|\vec{x_i}-\vec{x_j}|^2)
\label{approx_attention}
\end{split}
\end{equation}

In particular, the unnormalized attention paid to $x_j$ by $x_i$ is:

\begin{equation}
\begin{split}
    A_{nonorm_{i,j}} \approx e^{-\frac{a}{\sqrt{d}}|\vec{x_i}-\vec{x_j}|^2}
\label{approx_gaussian}
\end{split}
\end{equation}

Which is a Gaussian of the relative distance between the two points, i.e., Transformers can learn to approximate a 3D Gaussian to gate attention values and selectively attend to points nearby in 3D space.
By learning individual LayerNorm gains or $Q/K$ mappings, each head can tune the variance of this Gaussian filter, which allows each head to determine the appropriate spatial resolution for that type of information.

Below, we provide embeddings which satisfy Equation \ref{eq:ln_approx} and so lead to Gaussian attention filters.

\subsection{Spatial positional embeddings}

For simplicity, we consider the case of 1 spatial dimension and provide 4-dimensional embeddings, which satisfy Equation \ref{eq:ln_approx}.
The 3 (or $n$) dimensional case is similar.
Consider the embeddings:

\begin{equation}
\begin{split}
    E_{trig}(x) &= (\sin(x), -\sin(x), \cos(x), -\cos(x)) \\
    E_{lin}(x) &= (x, -x, 1, -1) \\
    E_{quad}(x) &= (x, -x, 1-\frac{x^2}{2}, \frac{x^2}{2}-1)
    \label{eq:three_embs}
\end{split}
\end{equation}

Here, $E_{trig}$ is similar to the standard sinusoidal positional encoding for linear sequences \citep{vaswani_attention_2017}.
Then, $E_{lin}$ and $E_{quad}$ can be thought of as the first and second order approximations of $E_{trig}$, respectively.
It can be shown (see Appendix \ref{sec:emb_proofs}) that for all three embeddings:

\begin{equation}
\begin{split}
    LN(E(x_i)) \cdot LN(E(x_j)) \approx -2|x_i-x_j|^2 + 4
    \label{eq:all_embs_approx}
\end{split}
\end{equation}

Subject to $|x_i - x_j|$ being small for $E_{trig}$ and subject to $|x_i|$ and $|x_j|$ being small for $E_{lin}$ and $E_{quad}$.
This surprising result for $E_{lin}$ stems from the non-linearity of LayerNorm causing the constant terms to be locally quadratic.
For more details, see Appendix \ref{sec:e_lin_proofs}.
An important consequence of this result is that simple linear embeddings of positions can still lead to an approximately 3D Gaussian filter of relative distance for attention.

Additionally, the approximation $E_{quad}$ is better than that of $E_{lin}$, however the requirement to encode $x^2$ explicitly makes it harder for individual attention heads to rescale the positions appropriately, since heads have to learn the linear and quadratic scaling terms separately.
However, in \ref{sec:emb_proofs} we show that ReGLU and SwiGLU activation functions \citep{shazeer_glu_2020} are capable of learning exactly quadratic functions of their input, which may allow some modern Transformers to learn positional embeddings of the form $E_{quad}$ after the appropriate rescaling.
This may be a useful benefit of GLU activation functions for models such as AlphaFold3 \citep{abramson_accurate_2024}.

\section{Experiments}

\begin{figure}
    \centering
  \includegraphics[width=0.9\textwidth]{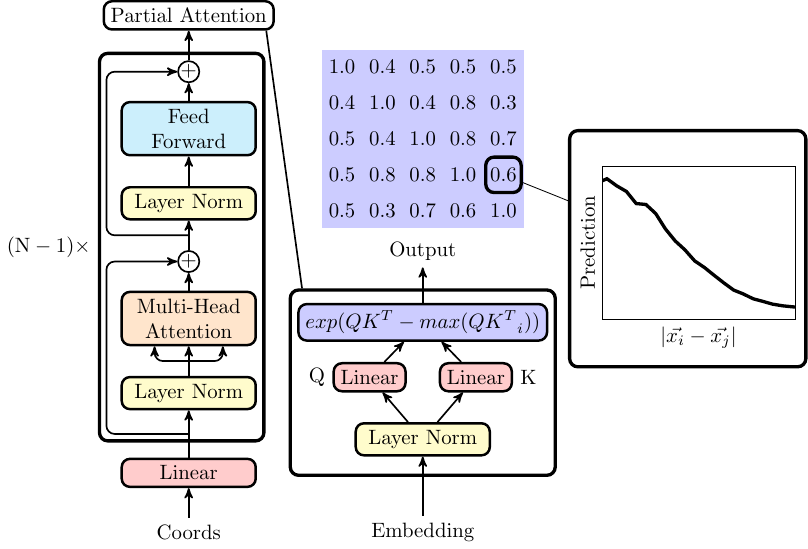}
  \caption{Overview of the simulated experiment model. Coordinates are passed through a Transformer encoder which is truncated such that the output is the unnormalized attention for a single head. Loss is computed as the difference between the attention for each pair and a Gaussian of the relative distance between those points.}
  \label{fig:sim_overview}
\end{figure}

\subsection{Simulated points}

To test our theory of how Transformers learn to measure distance, we designed a Transformer encoder which is truncated such that the output is the unnormalized attention matrix for a single head.
A diagram of this model is shown in Figure \ref{fig:sim_overview}.
We computed the loss as the $l_1$ difference between the output matrix and the matrix $A_{i,j} = e^(\frac{-(x_i-x_j)^2}{200^2})$.
This corresponds to the prenormalized softmax of the negative square of the relative distance between points, as predicted by our theory.
The data consisted of 10,000 ``structures'', each with five 3-dimensional points with coordinates randomly selected between 0 and 200.
Unless otherwise indicated, the Transformer encoder for all experiments has three layers (is truncated at the third layer), an embedding dimension of 256, a feedforward dimension of 1,024, 8 heads, pre-normalization and ReLU activation.
The models were trained with a batch size of 16 using the Adam optimizer with a peak learning rate of $4{\times}10^{-4}$ which is reached after 4,000 warmup steps, and then is quadratically decayed.

As in AlphaFold3 \citep{abramson_accurate_2024}, we transform the input structures before they are passed through the model.
Whenever a structure is loaded, its points are recentred, randomly rotated, and rescaled by a factor of $\frac{1}{16}$.
This has two benefits.
First, recentering and rescaling the points ensures that all coordinates stay relatively small, even before the embedding layer has learned an appropriate mapping.
Second, recentering and randomly rotating gives the model resilience to translations and rotations which encourages it to learn a distance measure which is truly SE(3)-invariant.

\subsubsection{Transformers can attend to $|\vec{x_i}-\vec{x_j}|^2$}

In Section \ref{sec:theory}, we show that Transformers are theoretically capable of learning Gaussian functions of distance.
We experimented with learning $e^{-|\vec{x_i}-\vec{x_j}|^p}$ for different powers $p$, ranging from 0.5 to 4, in increments of 0.5.
Figure \ref{fig:exp_loss} shows the relationship between $p$ and the validation loss.
As expected, Transformers can learn to reproduce the $e^{-|\vec{x_i}-\vec{x_j}|^2}$ attention matrix most accurately which shows that a Gaussian is the most natural way for Transformers to learn to filter attention spatially.

\begin{figure}
\begin{center}
\subfloat[Loss vs exponent]{\label{fig:exp_loss}
\centering
\includegraphics[width=0.45\linewidth]{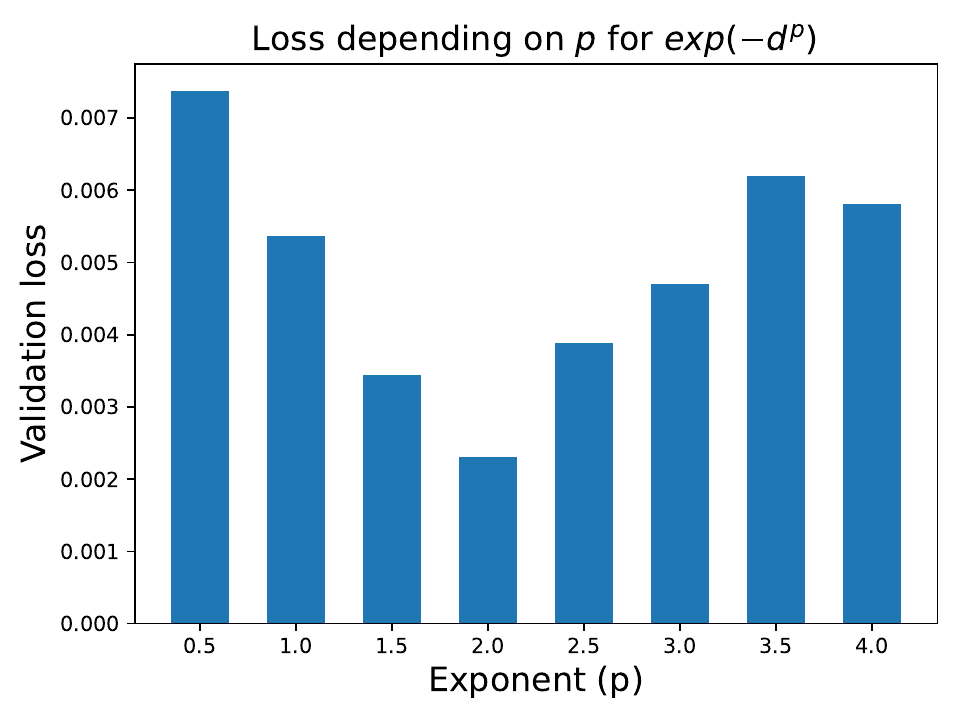}
}
\subfloat[Loss vs head dimension]{\label{fig:emb_dim_loss}
\centering
\includegraphics[width=0.45\linewidth]{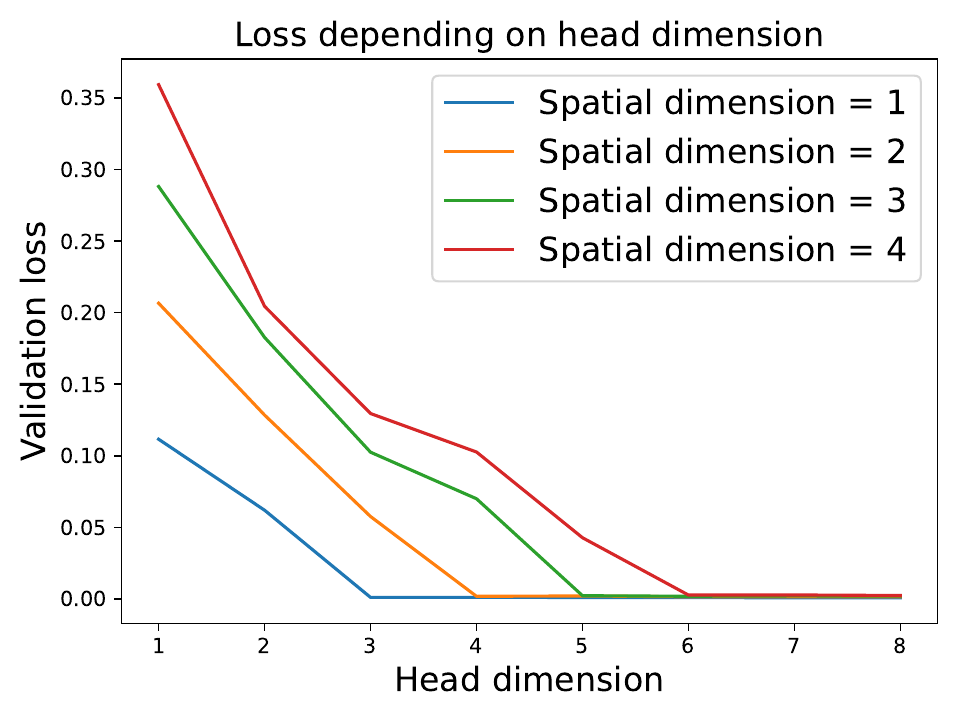}
}
\end{center}
\caption{(a) Validation loss as a function of the exponent $p$. The loss is lowest for $p=2$ which corresponds to learning a Gaussian function of distance. (b) Validation loss as a function of head dimension for different spatial dimensions. Models need a head dimension of $n+2$ to accurately measure distance in ${\R}^n$.}
\label{fig:sim_experiments}
\end{figure}

\subsubsection{Transformers need $n+2$ embedding dimensions to learn distance in $\mathbb{R}^n$}

Next, we explored how large a spatial embedding must be to learn a good approximation of distance for $\mathbb{R}^n$.
For the case of 1 spatial dimension, we provide a theoretical 4-dimensional embedding.
We trained models to predict Euclidean distance in $\mathbb{R}^n$ for $n \in {1,2,3,4}$ using head dimensions from 1 to 8.
Figure \ref{fig:emb_dim_loss} shows the relationship between validation loss and head dimension.
We found that the model only needs $n+2$ head dimensions to learn a good approximation of distance for all spatial dimensions.
This means that to learn a good approximation of distance in $\mathbb{R}^3$, a model must reserve at least 5 embedding dimensions for each head.
This is a surprisingly compact requirement which should be easily accommodated even in small Transformers.

\subsubsection{SE(3) transformations improve learned SE(3)-invariance}

We investigated whether Transformers will learn to overfit training data in a low data regime and if this can be prevented.
This could also correspond to a scenario where there is only a strong structural signal in a small number of training examples.
We reduced the number of training points to 100 and measured the training and validation loss.
To test the importance of data augmentation, we trained models with and without the random rotations (Figure \ref{fig:epoch_losses}).
The raw coordinates clearly demonstrate overfitting while the randomly rotated coordinates show near perfect alignment between training and validation loss.
Importantly, this form of data augmentation does not require creating new data points, only rotating the training data each epoch.

We measured the average $\ell_1$ distance between predictions of randomly rotated structures for the models trained with and without random rotations, as a measure of SE(3) divergence (Figure \ref{fig:se3_loss}).
In both cases the SE(3) divergence was almost the same as the validation loss, indicating that randomly rotating training structures reduces overfitting by encouraging models to learn an SE(3)-invariant measure of distance.

\begin{figure}
\begin{center}
\subfloat[Loss vs epoch]{\label{fig:epoch_losses}
\centering
\includegraphics[width=0.45\linewidth]{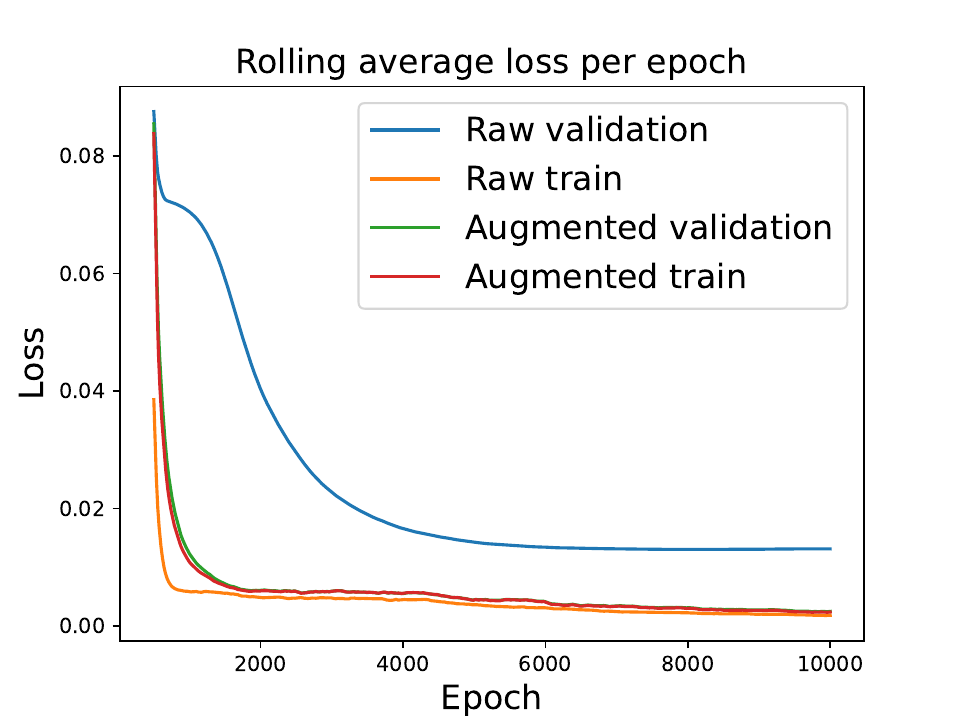}
}
\subfloat[Loss and SE(3) divergence]{\label{fig:se3_loss}
\centering
\includegraphics[width=0.45\linewidth]{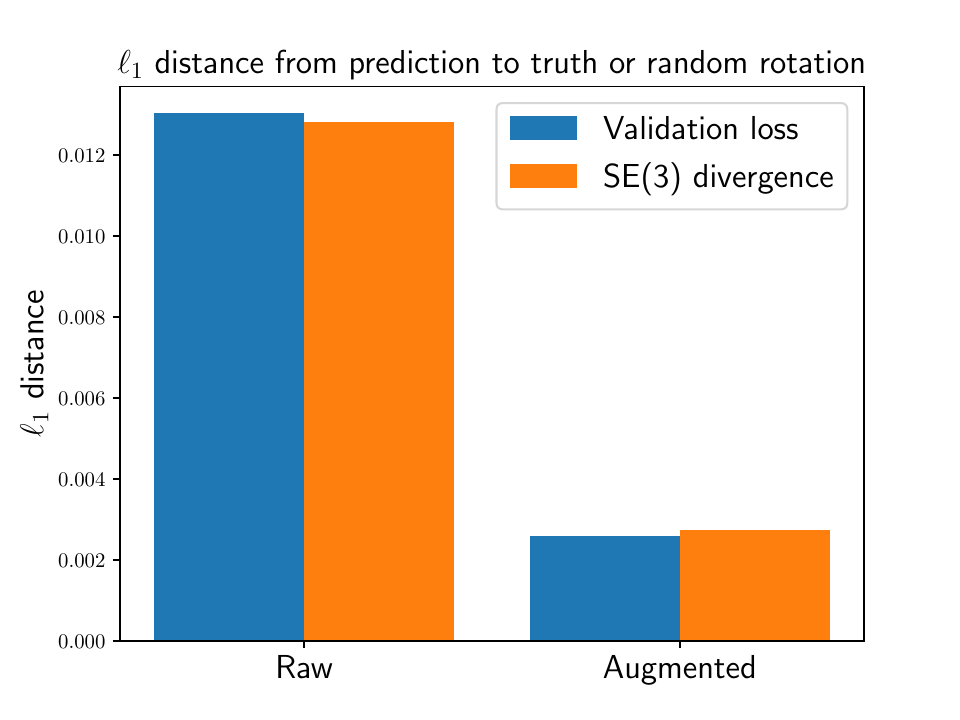}
}
\end{center}
\caption{(a) Training and validation loss per epoch in a low data setting. The rolling average of the last 500 epochs is shown. Raw coordinates show large divergence between training and validation loss whereas randomly rotated coordinates show near perfect alignment, resulting in a substantially lower validation loss. (b) The final validation loss is very close to the average predicted distance between randomly rotated structures, indicating that the validation loss is minimized by learning a more SE(3)-invariant measure of distance.}
\label{fig:aug_experiments}
\end{figure}

\subsection{Proteins}

Proteins are a natural fit for structural Transformers because they are composed of linear sequences embedded in 3D space.
As such, their properties depend on both sequential and structural features.
We considered two tasks in protein modelling: predicting masked tokens as a pretraining objective and predicting protein function conditioned on embeddings generated by pretrained models.
For all protein experiments we used the GO PDB dataset from DeepFRI \citep{gligorijevic_structure-based_2021} which comprises $\sim$36K protein chains.

\subsubsection{Pretraining a structural protein language model}

To test a Transformer's ability to learn useful structural patterns in proteins, we trained an ESM/BERT-style \citep{rives_biological_2021, devlin_bert_2019} model to complete masked token prediction.
We trained two models: one with coordinates (`coords model') and the other without (`non-coords model').
The version with coordinates added a linear embedding of the coordinates to the token embedding in the same way as the simulated experiments.
Both models were very similar to the smallest publicly released ESM1 model, consisting of a 6-layer Transformer encoder with a hidden dimension of 768, 12 attention heads, a feedforward dimension of 2048, and GeLU activation \citep{hendrycks_gaussian_2023}.
As in ESM, we omitted dropout.
To prevent the model from focusing too much on linear positional information, we used Sinusoidal Positional Encodings rather than Rotary Positional Encodings \citep{su_roformer_2022}.
As is common in masked token prediction, we masked 15\% of tokens.
Of the masked tokens, 80\% were replaced with a [MASK] token, 10\% were replaced with a random amino acid, and 10\% were left unchanged.
We clustered the data by 50\% sequence identity using MMSeqs2 \citep{steinegger_mmseqs2_2017} and randomly held out 1\% of the clusters to use as a validation set.
We trained each model for 100 epochs with a fixed batch size of 24, resulting in approximately 150K updates.
We used the Adam optimizer with 4,000 warmup steps to a peak learning rate of $2.3{\times}10^{-4}$, followed by inverse square decay.
Each time a structure was loaded, its coordinates were recentred, randomly rotated, and rescaled.

As shown in Figure \ref{fig:pretrain_losses}, adding coordinates substantially improved the model, leading to a final training perplexity of 6.5 with coordinates vs 11.9 without.
The final training loss for the version without coordinates (after 100 epochs) was surpassed by the version with coordinates after 8 epochs.
Additionally, the final validation loss was surpassed after only 4 epochs which may indicate that the structural features learned early in training are more robust to dissimilarity in sequence space.

We also investigated the difference in sequence recovery rates between the two models.
The total sequence recovery rate was $\sim$23\% for the non-coords model compared to $\sim$38\% for the coords model.
Figure \ref{fig:aa_recoveries} shows a breakdown of the recovery rates per amino acid type.
The recovery rate for the coords model was greater than or equal to that of the non-coords model for all amino acid types.
The difference was particularly stark for glycine and proline, which may be related to their distinct backbone conformational preferences \citep{ho_ramachandran_2005, beck_intrinsic_2008}.

\begin{figure}
\begin{center}
\subfloat[Pretraining loss]{\label{fig:pretrain_losses}
\includegraphics[width=0.45\textwidth]{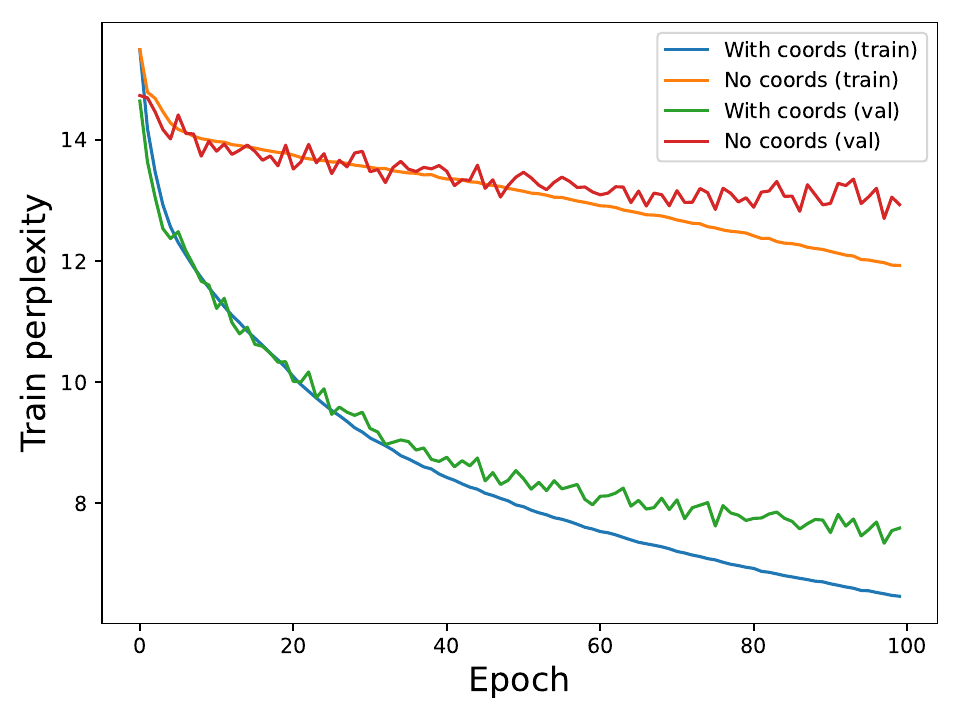}}
\subfloat[Sequence recovery breakdown]{
\label{fig:aa_recoveries}
\includegraphics[width=0.45\textwidth]{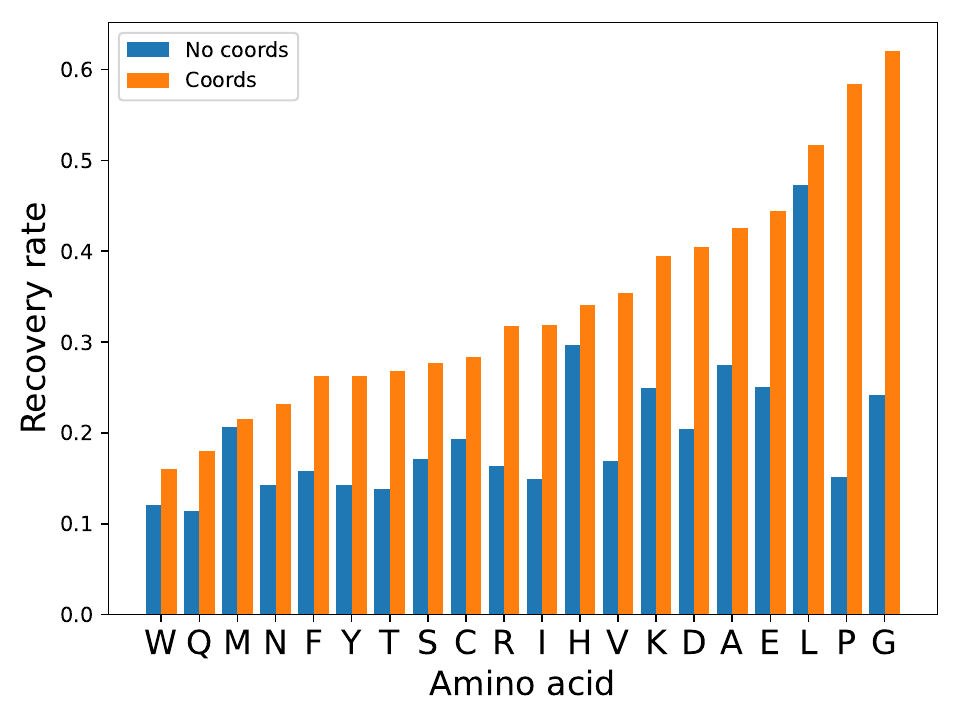}}
\end{center}
\caption{(a) Masked token prediction loss per epoch. Adding coordinates to the model hugely improves training and validation accuracy, as the model quickly learns to use coordinates to measure distance and compute structural features. (b) Sequence recovery rate for each amino acid type. The height of each bar represents the recovery rate for the coords model and the orange portion is the recovery rate for the non-coords model (the coords model had at least as high a recovery rate as the non-coords model for all amino acid types). The recovery rates for glycine and proline were substantially higher in the coords model, which may correspond to identification of beta turns.}
\end{figure}

\subsubsection{Pretrained models learn to measure distance}
\label{sec:prot_distance}

In the model trained in the previous paragraph, there are three inputs to the model for each token: amino acid type, sequential position, and 3D coordinates.
To test if the pretrained model with coordinates was learning to measure distance as predicted, we plotted the average attention paid by each pair of tokens across all heads in a layer as a function of distance.
We also plotted the average attention paid to each token as a function of relative sequence distance.
To isolate the effect of each feature, we fixed all amino acids to alanine.
We also fixed the linear sequence index to a constant value for all tokens while measuring 3D dependence and fixed the 3D position to $(0,0,0)$ while measuring linear dependence.
For the distance measurements, we rounded each pairwise distance to the nearest Angstrom and computed the average for each distance value.

Figure \ref{fig:gaussian_fits} shows the plots for 3D and linear positional dependence for all layers for both models as well as the amplitude and standard deviation of the fit Gaussian for each layer.
As expected, the model without coordinates shows no correlation with Euclidean distance.
Conversely, the model with coordinates shows a strong dependence in the early layers which is well-approximated by a Gaussian.
Both models progressively widen their field of view as information passes through the layers.
This may correspond to a system of reasoning where the model collects information about the local environment before processing this information alongside global patterns.
We provide a similar plot without isolating each of the factors in Appendix \ref{sec:raw_attn_plots}.

\begin{figure}[ht]
\begin{center}
\subfloat[Linear position, with coordinates]{
\includegraphics[width=0.15\linewidth]{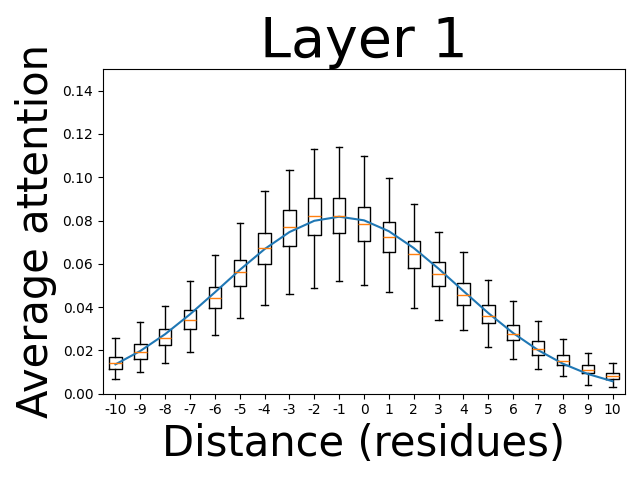}
\hfill
\includegraphics[width=0.15\linewidth]{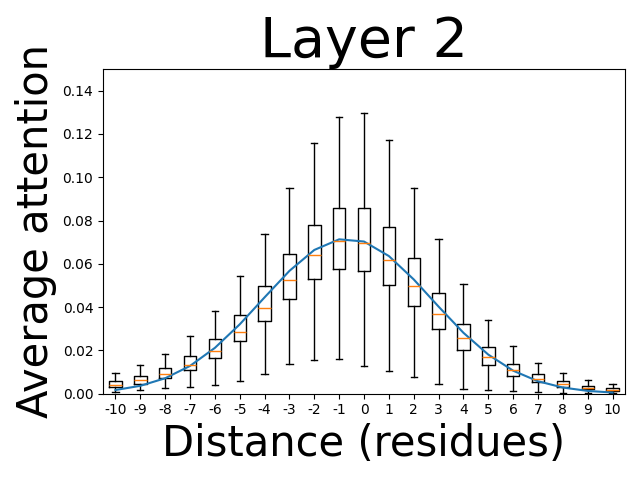}
\hfill
\includegraphics[width=0.15\linewidth]{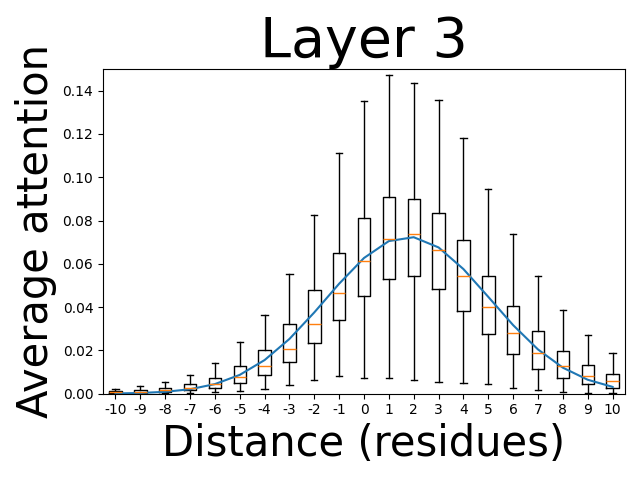}
\hfill
\includegraphics[width=0.15\linewidth]{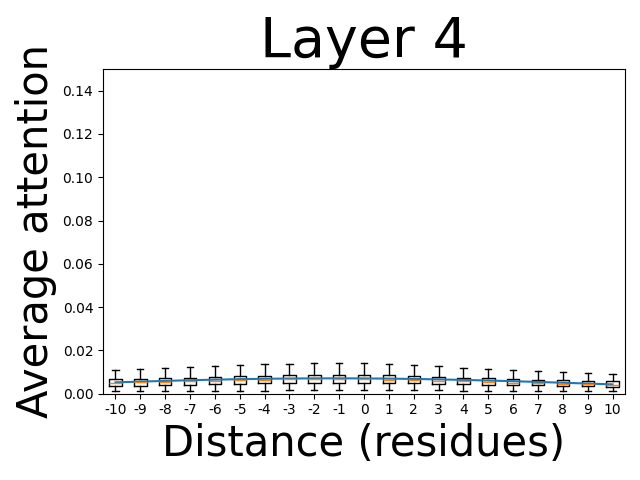}
\hfill
\includegraphics[width=0.15\linewidth]{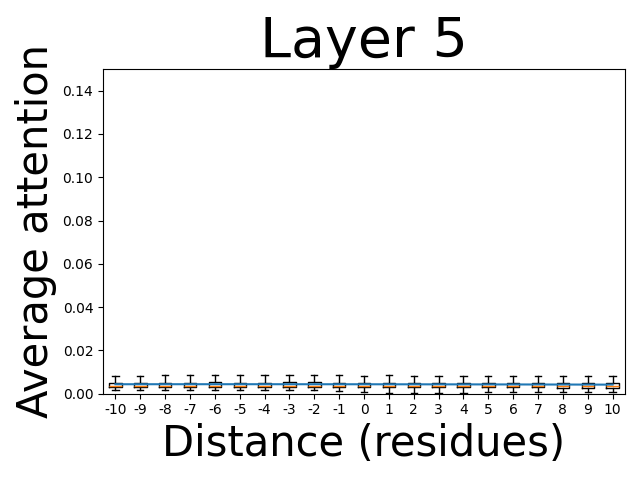}
\hfill
\includegraphics[width=0.15\linewidth]{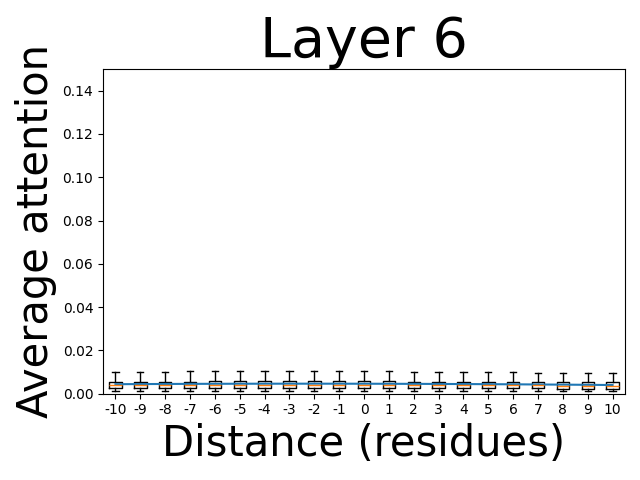}
}
\newline
\subfloat[3D position, with coordinates]{
\includegraphics[width=0.15\linewidth]{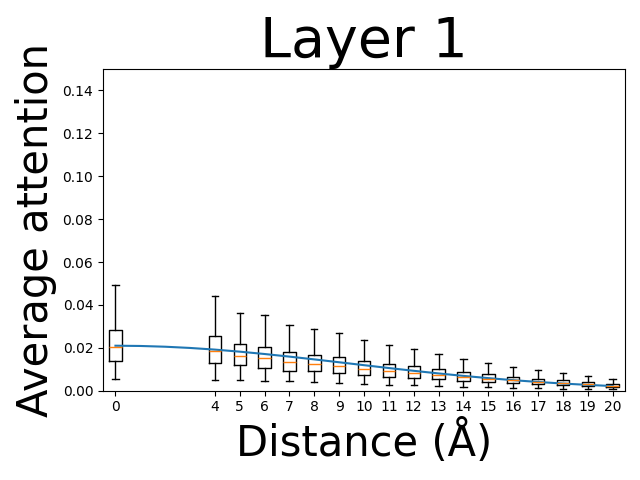}
\hfill
\includegraphics[width=0.15\linewidth]{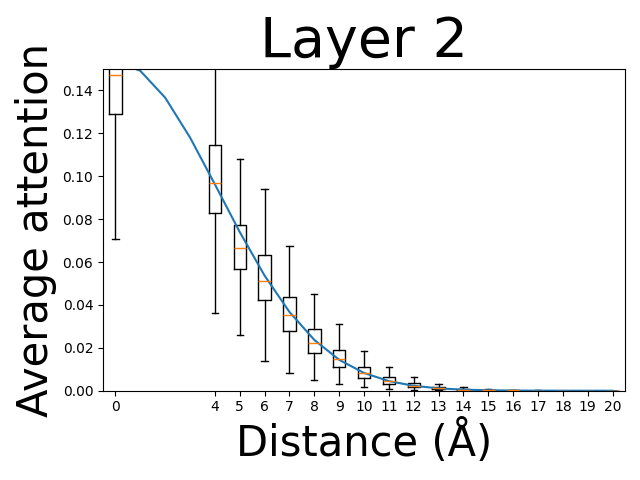}
\hfill
\includegraphics[width=0.15\linewidth]{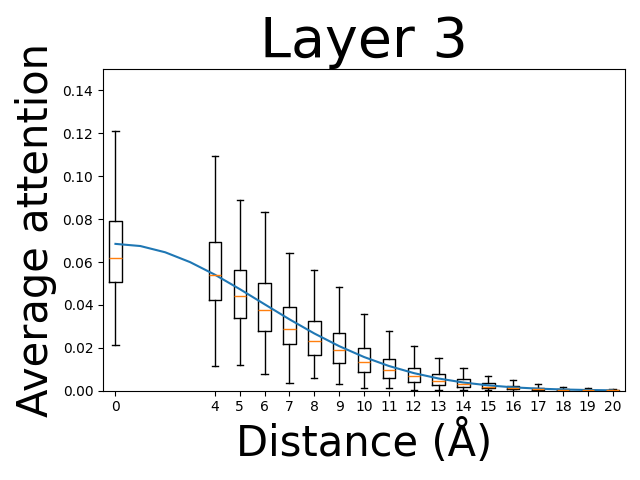}
\hfill
\includegraphics[width=0.15\linewidth]{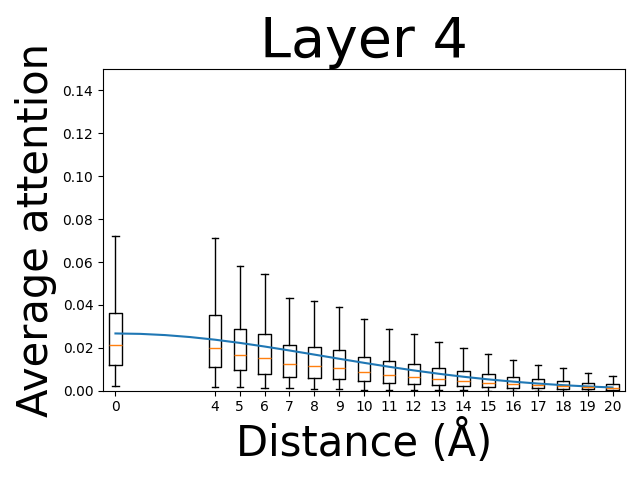}
\hfill
\includegraphics[width=0.15\linewidth]{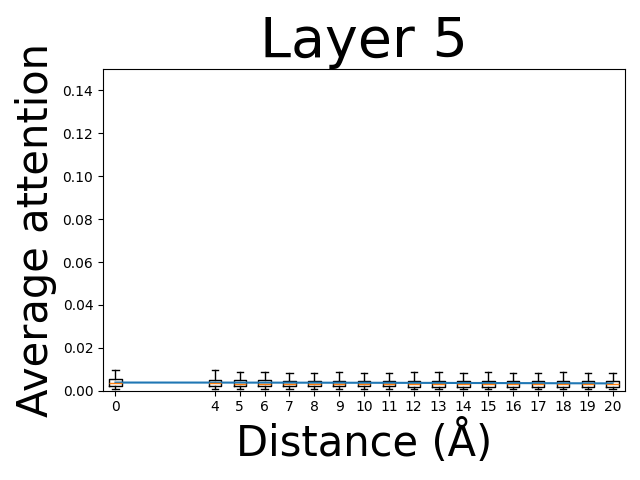}
\hfill
\includegraphics[width=0.15\linewidth]{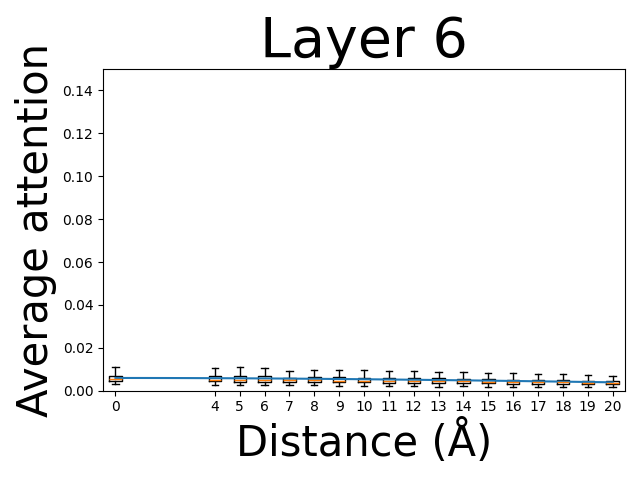}
}
\newline
\subfloat[Linear position, no coordinates]{
\includegraphics[width=0.15\linewidth]{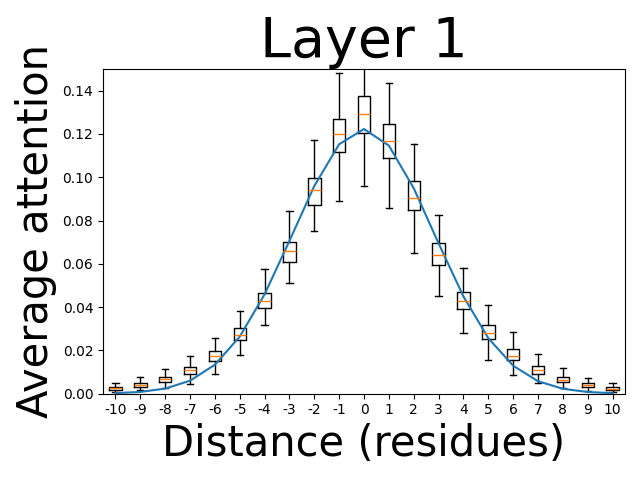}
\hfill
\includegraphics[width=0.15\linewidth]{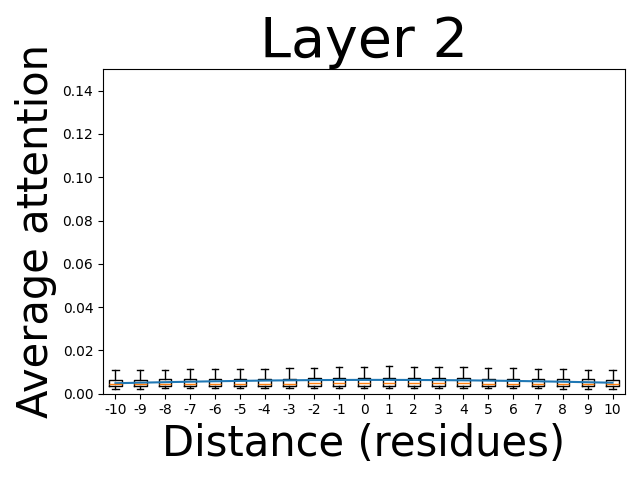}
\hfill
\includegraphics[width=0.15\linewidth]{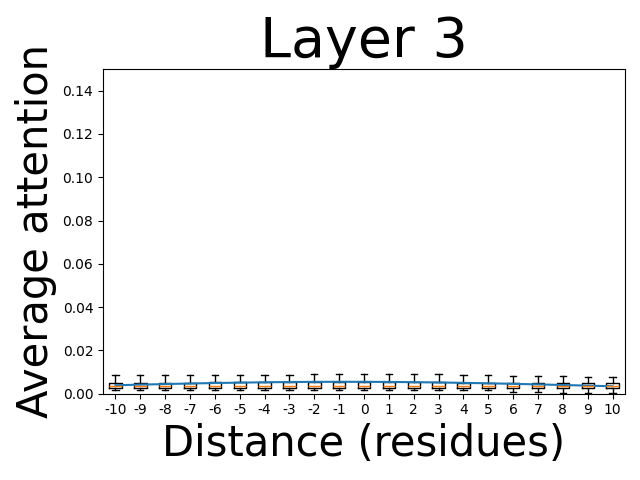}
\hfill
\includegraphics[width=0.15\linewidth]{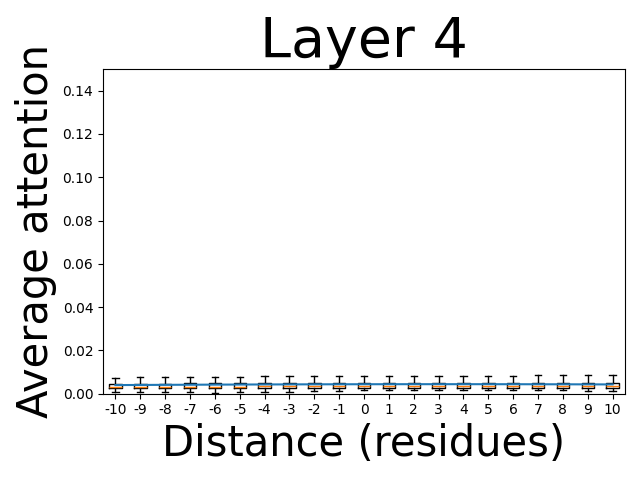}
\hfill
\includegraphics[width=0.15\linewidth]{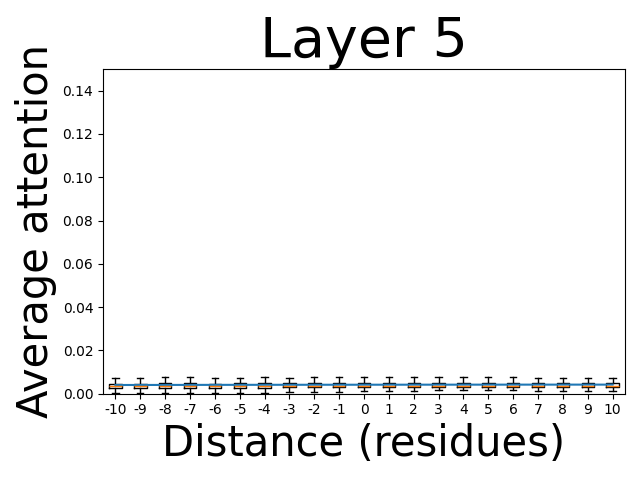}
\hfill
\includegraphics[width=0.15\linewidth]{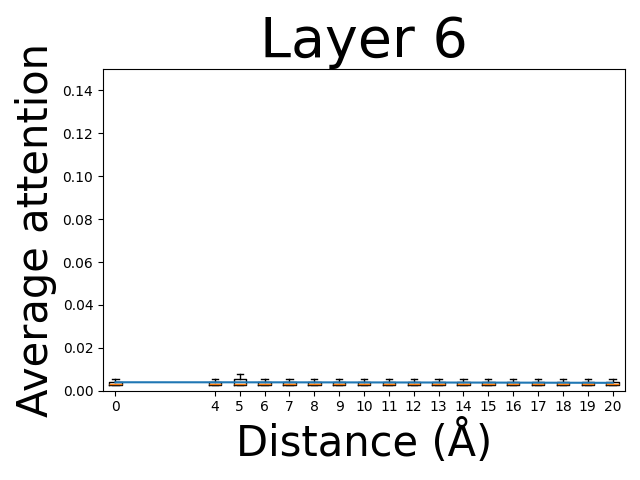}
}
\newline
\subfloat[3D position, no coordinates]{
\includegraphics[width=0.15\linewidth]{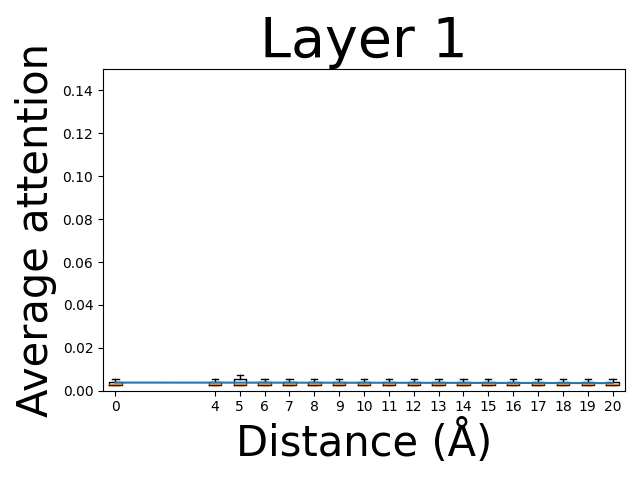}
\hfill
\includegraphics[width=0.15\linewidth]{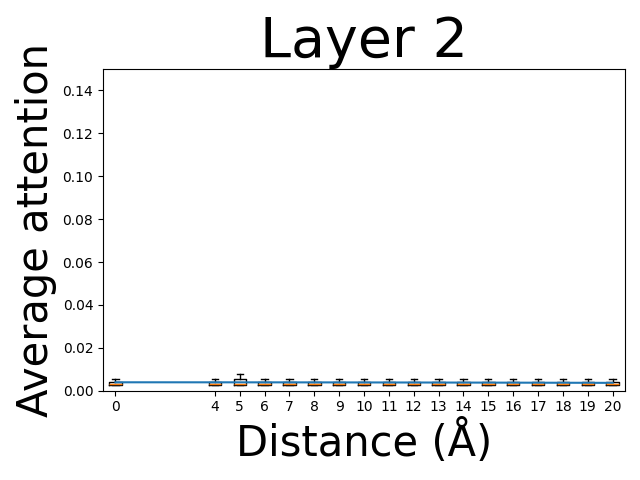}
\hfill
\includegraphics[width=0.15\linewidth]{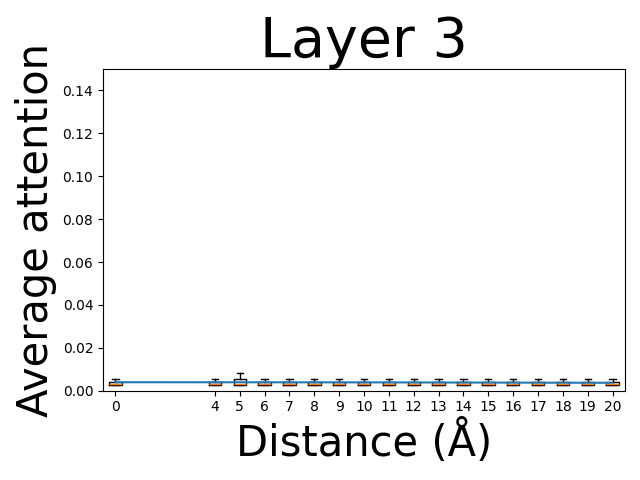}
\hfill
\includegraphics[width=0.15\linewidth]{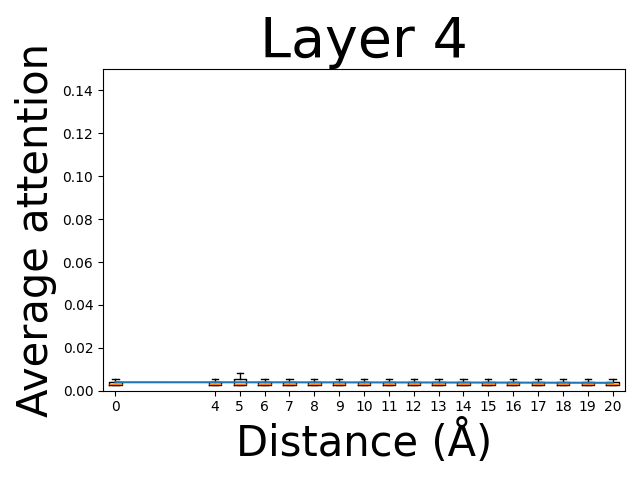}
\hfill
\includegraphics[width=0.15\linewidth]{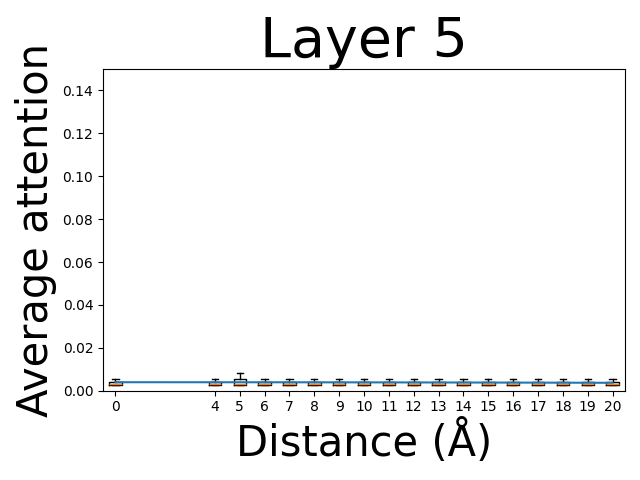}
\hfill
\includegraphics[width=0.15\linewidth]{figures/attn_plots/no_coords_3d_layer_05_attentions.png}
}
\newline
\subfloat[Gaussian amplitudes]{
\centering
\includegraphics[width=0.45\linewidth]{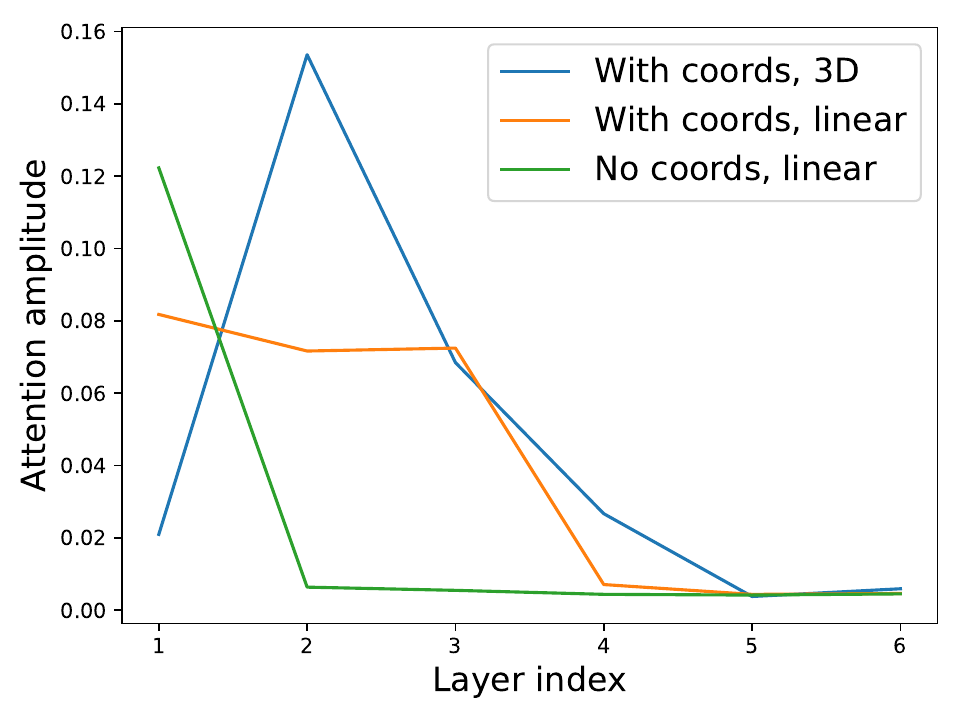}
}
\subfloat[Gaussian standard deviations]{\label{fig:aug_coords}
\centering
\includegraphics[width=0.45\linewidth]{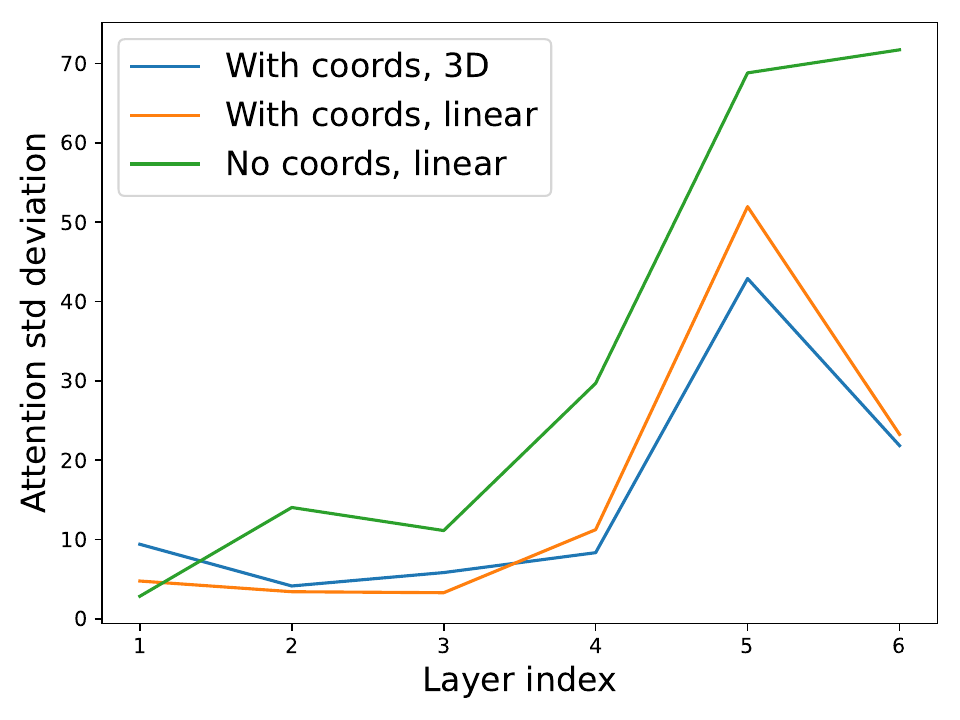}
}
\end{center}
\caption{Average attention paid per layer to linear and 3D positional information (a-d). Fit Gaussian functions are shown in blue for each plot. The model which was trained with coordinates learns to filter heavily by linear and 3D positional information whereas the model trained without coordinates only filters by linear information in early layers. Amplitudes (e) and standard deviations (f) of the fit Gaussians show that both models pay attention to local features early and then gradually widen their fields of view.}
\label{fig:gaussian_fits}
\end{figure}

\subsubsection{Protein function prediction}

Finally, we tested whether the pretrained protein model embeddings could improve accuracy on a downstream task.
We trained models to predict protein molecular function Gene Ontology labels \citep{ashburner_gene_2000}.
Protein function prediction has been studied extensively, and has been shown to benefit from both language and structural features \citep{gligorijevic_structure-based_2021}.
We trained models to predict protein function based on the mean token embedding output by the pretrained models, using the same data splits as DeepFRI \citep{gligorijevic_structure-based_2021}.
We compare our results to the DeepFRI and DeepCNN versions which were trained on PDB sequences.
DeepFRI provided a useful comparison since it achieved state of the art performance on protein function prediction before models were trained using ESM embeddings.
Thus we could pretrain our model on the same data as DeepFRI and evaluate the specific contributions of model architectures.
DeepFRI is an ensemble of Graph Convolutional Networks \citep{kipf_semi-supervised_2022} with different propagation rules, as in \citet{dehmamy_understanding_2019}.
DeepFRI uses a pretrained LSTM model \citep{graves_generating_2014} to generate language model embeddings for node features.
DeepCNN is a Convolutional Neural Network \citep{lecun_convolutional_nodate} meant to replicate DeepGO \citep{kulmanov_deepgo_2018} but retrained on the same sequences as DeepFRI (and our model).

We train two models based on the pretrained models from the previous section.
Our first model is based on the simple MLP model from \citet{kulmanov_protein_2024}, which is a simple 2 layer MLP block of 1,024 dimensions with a residual connection.
The input to the model is a learned linear embedding of the mean token embedding output by the pretrained models.

Our second model is a finetuned version of the pretrained masked token prediction models.
As in BERT \citep{devlin_bert_2019} the output is a learned linear projection of the final start token embedding.
Each model (coords/no coords) is finetuned for 20 epochs with a constant learning rate of $3{\times}10^{-5}$.

The results are shown in Table \ref{tab:function}.
Additional results for predicting biological process and cellular component are provided in Appendix \ref{sec:extra_experiments}.
Our sequence-only MLP model compared competitively with DeepCNN and our sequence-structure MLP model compared competitively with DeepFRI.
The MLP models are very simple, taking less than 2 minutes to train on a single GPU, since they are O(1) in sequence length after obtaining the mean sequence embeddings.
This indicates that the pretrained models have learned rich embeddings of sequence and structure.

Additionally, the finetuned structure model achieves a substantially better AUPRC (0.566 vs 0.446) and max F1 score (0.575 vs 0.460) than DeepFRI.
The gap between our structural and non-structural models is much wider than that of DeepFRI, indicating that our model derives a greater benefit from structural information, despite the fact that DeepFRI uses established models for structural processing.

\begin{table}[]
    \centering
    \caption{GO molecular function prediction results.}
    \begin{tabular}{llcllll}
        \toprule
         \multirow{2}{*}{Pretraining ({\#} seqs)} & \multirow{2}{*}{Method} & \multirow{2}{*}{Structure} & \multirow{2}{*}{AUPRC} & (Gain from & \multirow{2}{*}{Max F1} & (Gain from \\
         &&&& structure) & & structure)\\\midrule
         \multirow{2}{*}{DeepFRI ($\sim$10M)} & \multirow{2}{*}{DeepFRI} & \xmark & 0.427 && 0.438 & \\
         & & \cmark & 0.446 & 0.019 & 0.460 & 0.022 \\\cmidrule(lr){1-7}
         None & DeepCNN & \xmark & 0.363 && 0.385 \\\cmidrule(lr){1-7}
         \multirow{4}{*}{Ours ($\sim$35K)} &\multirow{2}{*}{MLP} & \xmark & 0.361 && 0.377 & \\
         & & \cmark & 0.460 & 0.099 & 0.465 & 0.088 \\\cmidrule(lr){2-7}
         & \multirow{2}{*}{Finetuned} & \xmark & 0.381 && 0.421 & \\
         & & \cmark & 0.566 & 0.185 & 0.575 & 0.154 \\
         \bottomrule
    \end{tabular}
    \label{tab:function}
\end{table}

\section{Conclusions}

In this work we show that standard Transformers are capable of performing structural reasoning by learning an approximately SE(3)-invariant distance filter on attention.
We predict that even linearly embedded positions can produce Gaussian attention filters of distance and validate this prediction using experiments on simulated points and proteins.
The protein model naturally learns to use the 3D coordinates to measure distance which substantially improves its ability to predict masked tokens.
The structural information also materially improves the model's ability to inform function prediction, providing even greater benefit than existing custom-built structural models.

We show that Transformers can learn to measure distance and operate as hybrid structure/language models.
In contrast to many conventional structure models which are based on GNNs, Transformers do not explicitly model edges.
This admits memory-efficient implementations such as FlashAttention \citep{dao_flashattention_2022, dao_flashattention-2_2023} which allow for fast, fully-connected updates in linear memory.
Most structure models store distance in edges which use quadratic memory for fully-connected graphs.
Practically, this means that Transformers can perform structural reasoning on more highly connected structures, which may allow them to ``see'' more while making decisions.

As shown in Section \ref{sec:prot_distance}, the pretrained protein model trained with coordinates showed a strong positional dependence in attention in early layers followed by a weak positional dependence in the last few layers.
It is possible that this corresponds to the model identifying structural features such as secondary structure and local physics, before encoding these and performing long-range sequential processing.
This corresponds with the contemporary trend of preprocessing structural information to create structural tokens for Transformers compared to the more traditional approach of using language model embeddings as input to structural GNNs.

In this work we explore two protein tasks: masked token prediction and function prediction.
Virtually all protein learning tasks benefit from combined sequence and structure processing and so this work could be applied across areas including inverse folding, structure prediction, and arbitrary property prediction.
As is common in tasks such as inverse folding, the input structures could include more atoms from the backbone.
This could be achieved by simply projecting these atom coordinates to the input representation, unlike GNN-based methods which require explicitly including all pairwise distances in the edge features.
Additionally, while proteins are a natural fit for structural Transformers due to their combined sequential and spatial data, there are many other possible applications of this type of model.
Some of these include tasks with explicit 3D information such as small molecules and 3D objects.
However, there are also tasks where learning an approximate relationship between entities in Euclidean space could help with reasoning, such as vision Transformers \citep{dosovitskiy_image_2021} or even large language models.

\section{Reproducibility Statement}

Proofs for claims made in Section \ref{sec:theory} are available in the Appendix.
Code to replicate the experiments is available in an anonymous zipped directory in the supplementary materials. 
This includes code for models, data download/processing, model training, and evaluation.


\subsubsection*{Acknowledgments}
IE was supported by funding from the Engineering and Physical Sciences Research council [EP/S024093/1] and Exscientia.

\newpage

\bibliography{references}
\bibliographystyle{tmlr}

\newpage

\appendix
\section{Appendix}

\renewcommand{\thefigure}{A\arabic{figure}}
\setcounter{figure}{0}

\renewcommand{\theequation}{A\arabic{equation}}
\setcounter{equation}{0}

\subsection{Coordinates can be rescaled for better approximations}
\label{sec:rescaling}

The validity of the approximations shown so far depends on the coordinates being small.
Figure \ref{fig:c_scaling} shows how well $(cLN(E_{lin}(\frac{x_1}{c}))){\cdot}cLN(E_{lin}(\frac{x_2}{c}))$ approximates a quadratic as a function of $c$ and how well the resulting exponential approximates a Gaussian.
The scaling parameter $c$ can be learned by the input and output linear maps of the embedding or by the LayerNorm gain parameters.
In this way, all coordinates can be rescaled such that the previous sections produce arbitrarily good approximations.

\begin{figure}[h]
    \centering
  \includegraphics[width=0.9\textwidth]{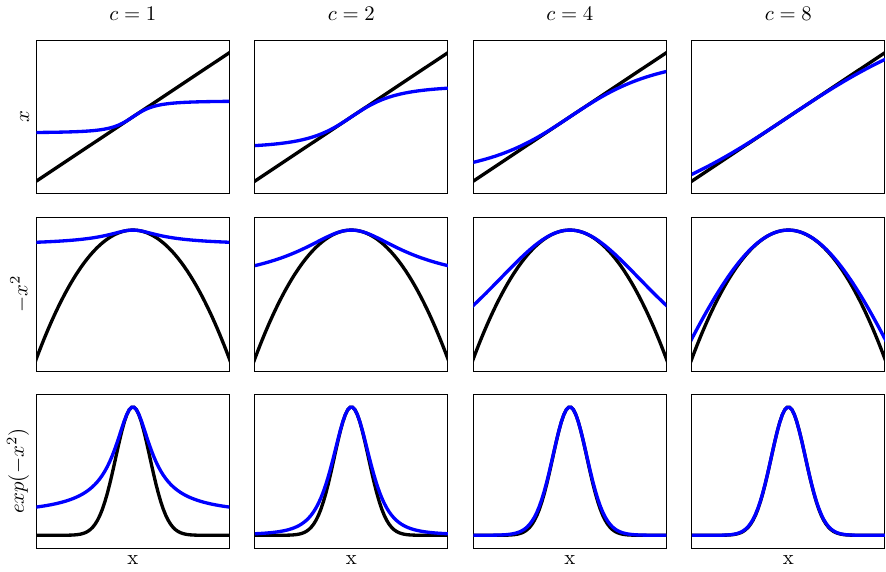}
  \caption{Linear and quadratic approximation and resulting Gaussian approximation of $cLN(E_{lin}(\frac{x}{c}))$ as a function of $c$. Target functions are shown in black and approximations are shown in blue. Increasing the scaling parameter $c$ results in a better approximation without changing the shape of the underlying Gaussian.}
  \label{fig:c_scaling}
\end{figure}

\subsection{Embedding proofs}
\label{sec:emb_proofs}

\subsubsection{Trigonometric embeddings}

Consider the embedding $E_{trig}$:

\begin{equation}
    E_{trig}(x) = (\cos(x), -\cos(x), \sin(x), -\sin(x))
    \label{eq:e_trig}
\end{equation}

Then the mean, $\mu(E_{trig}(x))$, is:

\begin{equation}
    \mu(E_{trig}(x)) = 0
    \label{eq:trig_mean}
\end{equation}

and the variance, $\sigma(E_{trig}(x))$, is:

\begin{equation}
\begin{split}
    \sigma(E_{trig}(x)) & = \sqrt{\frac{1}{4}(\sin(x)^2+(-\sin(x))^2+\cos(x)^2+(-\cos(x))^2)} \\
    & = \sqrt{\frac{1}{2}(\sin(x)^2+\cos(x)^2)} \\
    & = \frac{1}{\sqrt{2}}
    \label{eq:trig_var}
\end{split}
\end{equation}

So

\begin{equation}
\begin{split}
    LN(E_{trig}(x))
    = \frac{E_{trig}(x)-\mu}{\sigma}
    = \frac{E_{trig}(x)-0}{\frac{1}{\sqrt{2}}}
    = \sqrt{2}E_{trig}(x)
\end{split}
\end{equation}

\begin{equation}
\begin{split}
    LN(E_{trig}(x_1)) \cdot LN(E_{trig}(x_2)) & = \sqrt{2}^2(2\cos(x_1)\cos(x_2) + 2\sin(x_1)\sin(x_2)) \\
    & = 4(\cos(x_1-x_2)) \\
    & \approx 4 - 2(x_1-x_2)^2
    \label{eq:ln_dot_approx_trig}
\end{split}
\end{equation}

\subsubsection{Layer normalization can learn approximately quadratic functions of input}
\label{sec:e_lin_proofs}

Consider the first-order approximation of $E_{trig}$, $E_{lin}$:

\begin{equation}
    E_{lin}(x) = (1, -1, x, -x)
    \label{eq:e_lin}
\end{equation}

We have

\begin{equation}
    \mu(E_{lin}(x)) = 1-1+x-x = 0
    \label{eq:4d_mean}
\end{equation}

\begin{equation}
\begin{split}
    \sigma(E_{lin}(x)) & = \sqrt{\frac{1}{4}(1^2+(-1)^2+x^2+(-x)^2)} \\
    & = \sqrt{\frac{1}{2}(1+x^2)} \\
    & \approx \sqrt{\frac{1}{2}(1+x^2+\frac{x^4}{4})} \\
    & = \sqrt{\frac{1}{2}(1+\frac{x^2}{2})^2} \\
    & = \frac{1}{\sqrt{2}}(1+\frac{x^2}{2})
    \label{eq:4d_var}
\end{split}
\end{equation}

\noindent\begin{minipage}[t]{.5\linewidth}
\begin{equation}
\begin{split}
    \frac{1-\mu}{\sigma} & = \frac{1}{\frac{1}{\sqrt{2}}(1+\frac{x^2}{2})} \\
    & = \sqrt{2}\frac{1-\frac{x^2}{2}}{(1+\frac{x^2}{2})(1-\frac{x^2}{2})} \\
    & = \sqrt{2}\frac{1-\frac{x^2}{2}}{1-\frac{x^4}{4}} \\
    & \approx \sqrt{2}(1-\frac{x^2}{2})
    \label{eq:lin_ln_1}
\end{split}
\end{equation}
\end{minipage}%
\begin{minipage}[t]{.5\linewidth}
\begin{equation}
\begin{split}
    \frac{x-\mu}{\sigma} & = \frac{x}{\frac{1}{\sqrt{2}}(1+\frac{x^2}{2})} \\
    & \approx \sqrt{2}(x)
    \label{eq:lin_ln_x}
\end{split}
\end{equation}
\end{minipage}

So,

\begin{equation}
\begin{split}
    LN((1, -1, x, -x)) \approx \sqrt{2}((1-\frac{x^2}{2}), -(1-\frac{x^2}{2}), x, -x)
    \label{eq:lin_ln_approx}
\end{split}
\end{equation}

In this way, layer normalization can be used to generate approximately quadratic functions of the input.
In particular,

\begin{equation}
\begin{split}
    LN(E_{lin}(x_1)) \cdot LN(E_{lin}(x_2)) & \approx \sqrt{2}^2(2(1-\frac{x_{1}^2}{2})(1-\frac{x_{2}^2}{2}) + 2(x_{1}x_2)) \\
    & = 4(1-\frac{x_{1}^2}{2}-\frac{x_{2}^2}{2}+\frac{x_{1}^{2}x_{2}^2}{4}+x_{1}x_2) \\
    & = 4(\frac{1}{2}(2-(x_{1}^2-2x_{1}x_{2}+x_{2}^2)+\frac{x_{1}^{2}x_{2}^2}{2})) \\
    & = 2(-(x_1-x_2)^2+2+\frac{x_{1}^{2}x_{2}^2}{2}) \\
    & \approx -2(x_1-x_2)^2 + 4
    \label{eq:ln_dot_approx}
\end{split}
\end{equation}

\subsubsection{Gated linear units provide a better approximation}

\begin{lemma}
ReGLU and SwiGLU can produce functions of $x^2$. In particular:
\[ReGLU(x)+ReGLU(-x) = SwiGLU(x)+SwiGLU(-x) = x^2\]
\end{lemma}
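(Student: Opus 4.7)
The plan is to reduce both identities to pointwise scalar computations by taking the gate and value streams of the GLU to coincide with a single scalar input $x$. Under this choice, $\mathrm{ReGLU}(x) = x \cdot \mathrm{ReLU}(x)$ and $\mathrm{SwiGLU}(x) = x \cdot \mathrm{Swish}(x) = x^2 \sigma(x)$, where $\sigma$ denotes the sigmoid. Both activations are then easy to evaluate either by sign or via a direct scalar identity, so I would simply verify the two claimed equalities.

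For the ReGLU identity, I would split on the sign of $x$. For $x \geq 0$, $\mathrm{ReGLU}(x) = x \cdot x = x^2$ and $\mathrm{ReGLU}(-x) = (-x)\cdot \mathrm{ReLU}(-x) = 0$. For $x < 0$ the roles swap, so $\mathrm{ReGLU}(x) = 0$ and $\mathrm{ReGLU}(-x) = (-x)^2 = x^2$. In either case the sum equals $x^2$.

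For the SwiGLU identity, I would factor out $x^2$:
\[
\mathrm{SwiGLU}(x) + \mathrm{SwiGLU}(-x) = x^2 \sigma(x) + x^2 \sigma(-x) = x^2 \bigl(\sigma(x) + \sigma(-x)\bigr),
\]
and then invoke the elementary identity $\sigma(-x) = 1 - \sigma(x)$, which collapses the expression to $x^2$.

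The only real subtlety, and the closest thing to an obstacle, is justifying the scalar reduction: the general GLU has independent linear projections for its gate and value streams, so one needs to note that setting both projections to the identity (and both biases to zero) is a legitimate parameter choice realizable inside the layer. Once this is made explicit, each identity is essentially one line of algebra and the lemma follows.
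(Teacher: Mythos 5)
Your proposal is correct and takes essentially the same route as the paper: a direct pointwise verification treating ReGLU and SwiGLU as scalar functions with gate and value streams tied to the same input. Your sign case split for ReGLU and the use of $\sigma(x)+\sigma(-x)=1$ for SwiGLU are cosmetic variants of the paper's max-function algebra and fraction-combining, and your case analysis is arguably the cleaner of the two.
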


\begin{proof}
\begin{equation}
\begin{split}
    ReGLU(x)+ReGLU(-x) & = max(0,x)x + max(0,-x)x \\
    & = max(-x,x)x \\
    & = |x|x \\
    & = x^2
\label{eq:reglu_x2}
\end{split}
\end{equation}

Similarly,

\begin{equation}
\begin{split}
    SwiGLU(x)+SwiGLU(-x) & = \frac{x^2}{1+e^{-x}} + \frac{(-x)^2}{1+e^{-(-x)}} \\
    & = \frac{x^2(1+e^{x})}{(1+e^{-x})(1+e^{x})} + \frac{x^2(1+e^{-x})}{(1+e^{x})(1+e^{-x})} \\
    & = \frac{x^2(1+e^{x})+x^2(1+e^{-x})}{1+e^{-x}+e^{x}+e^{x-x}} \\
    & = \frac{x^2(2+e^{x}+e^{-x})}{2+e^{x}+e^{-x}} \\
    & = x^2
\label{eq:swiglu_x2}
\end{split}
\end{equation}

\end{proof}

\begin{theorem}
With input $\vec{x}=(1, x, x^2)$, there exists a linear embedding, $E$, and a linear map $L$ such that $L(LN(E(\vec{x}))) = (1, x, x^2)/{\sigma}$ where $1 \leq \sigma \leq 1+\frac{x^4}{8}$

\end{theorem}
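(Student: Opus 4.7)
The plan is to exhibit a linear embedding $E\colon \R^3 \to \R^6$ of the form $E(\vec{x}) = x_1 \vec{a} + x_2 \vec{b} + x_3 \vec{c}$ for carefully chosen vectors $\vec{a}, \vec{b}, \vec{c} \in \R^6$, so that when $\vec{x} = (1, x, x^2)$ we have $E(\vec{x}) = \vec{a} + x \vec{b} + x^2 \vec{c}$. These vectors will be designed so that the component-wise mean of $E(\vec{x})$ vanishes, which turns $LN$ into plain division by the standard deviation $\sigma$, and so that $\sigma$ stays within $[1, 1 + x^4/8]$. The linear readout $L$ is then taken to be a left inverse of the matrix $[\vec{a} \mid \vec{b} \mid \vec{c}]$, which simply extracts the original coefficients $(1, x, x^2)$ back out.

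Writing $E(\vec{x})_i = a_i + b_i x + c_i x^2$, the polynomial $\sum_i E_i^2$ expands to $\sum a_i^2 + 2x\sum a_i b_i + x^2(\sum b_i^2 + 2\sum a_i c_i) + 2x^3 \sum b_i c_i + x^4 \sum c_i^2$. The design constraints are therefore $\sum a_i = \sum b_i = \sum c_i = 0$ (zero mean); $\sum a_i^2 = 6$ (variance $1$ at $x = 0$); $\sum a_i b_i = 0$, $\sum b_i^2 + 2\sum a_i c_i = 0$, $\sum b_i c_i = 0$ (killing the $x$, $x^2$, $x^3$ terms); and $\sum c_i^2$ small enough to keep the residual $x^4$ term within the stated bound. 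A concrete solution in $\R^6$ is $\vec{a} = (1,1,1,-1,-1,-1)$, $\vec{b} = (1,-1,0,1,-1,0)$, $\vec{c} = (-\tfrac12,-\tfrac12,0,\tfrac12,\tfrac12,0)$; these are linearly independent, and a direct expansion gives $\sum_i E_i^2 = 6 + x^4$, hence $\sigma = \sqrt{1 + x^4/6}$.

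The bounds on $\sigma$ then follow from elementary estimates: $\sigma \geq 1$ is immediate, and $\sqrt{1+y} \leq 1 + y/2$ for $y \geq 0$ gives $\sigma \leq 1 + x^4/12 \leq 1 + x^4/8$. Since $\vec{a}, \vec{b}, \vec{c}$ are linearly independent, $M = [\vec{a} \mid \vec{b} \mid \vec{c}] \in \R^{6\times 3}$ has a left inverse $L \in \R^{3\times 6}$ satisfying $L\vec{a} = e_1$, $L\vec{b} = e_2$, $L\vec{c} = e_3$, and therefore $L(LN(E(\vec{x}))) = L(E(\vec{x}))/\sigma = (1, x, x^2)/\sigma$. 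I expect the main obstacle to be the algebraic design in the second paragraph: finding vectors that simultaneously annihilate the $O(x), O(x^2), O(x^3)$ coefficients of $\sum_i E_i^2$ while remaining linearly independent and leaving the $O(x^4)$ coefficient small; once a workable choice is in hand (and it turns out $n = 6$ suffices), everything else is routine linear algebra and estimation.
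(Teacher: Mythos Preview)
Your construction is correct: the zero-mean conditions hold, the moment constraints kill the $x,x^2,x^3$ terms so that $\sum_i E_i^2 = 6 + x^4$ exactly, and $\vec{a},\vec{b},\vec{c}$ are independent, so the left inverse $L$ exists and recovers $(1,x,x^2)/\sigma$ with $\sigma=\sqrt{1+x^4/6}\in[1,\,1+x^4/12]\subset[1,\,1+x^4/8]$. This is a genuinely different route from the paper, which works in $\R^4$ via the second-order Taylor truncation of its trigonometric embedding, $E_{quad}(x)=\bigl(1-\tfrac{x^2}{2},\,-(1-\tfrac{x^2}{2}),\,x,\,-x\bigr)$, and then simply computes the LayerNorm standard deviation $\tfrac{1}{\sqrt{2}}\sqrt{1+x^4/4}\approx\tfrac{1}{\sqrt{2}}(1+x^4/8)$ without exhibiting $L$. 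Your approach is more systematic---you impose the vanishing-coefficient conditions on $\sum_i E_i^2$ directly and then invert---and it actually delivers the full statement as written: observe that $E_{quad}$, viewed as a linear map on $(1,x,x^2)$, has its first and third columns proportional (the column for $x^2$ is $-\tfrac12$ times the column for $1$), so no linear $L$ can separately recover $1$ and $x^2$ from $LN(E_{quad})$. The price you pay is two extra embedding dimensions; what you buy is an injective embedding, an explicit $L$, and a slightly sharper $\sigma$ bound.
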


\begin{proof}

Consider the second-order approximation of $E_{trig}$, $E_{quad}(x)$:

\begin{equation}
    E_{quad}(x) = (1-\frac{x^2}{2}, -(1-\frac{x^2}{2}), x, -x)
    \label{eq:e_quad}
\end{equation}

We have

\begin{equation}
    \mu(E_{quad}(x)) = 0
    \label{eq:5d_mean}
\end{equation}

\begin{equation}
\begin{split}
    \sigma(E_{quad}(x)) & = \sqrt{\frac{1}{4}(x^2+(-x)^2+(1-\frac{x^2}{2})^2+(-(1-\frac{x^2}{2}))^2)} \\
    & = \sqrt{\frac{1}{4}(2+\frac{2x^4}{4})} \\
    & = \frac{1}{\sqrt{2}}\sqrt{1+\frac{x^4}{4}} \\
    & \approx \frac{1}{\sqrt{2}}\sqrt{1+\frac{x^4}{4}+\frac{x^8}{64}} \\ \\
    & = \frac{1}{\sqrt{2}}\sqrt{(1+\frac{1}{8}x^4)^2} \\
    & = \frac{1}{\sqrt{2}}(1+\frac{1}{8}x^4)
    \label{eq:5d_var}
\end{split}
\end{equation}

\end{proof}

The error for the LayerNorm-only approximation of $x$ and $x^2$ was $O(n^3)$ and $O(n^4)$ respectively.
In comparison, the error given $x^2$ as input for $x$ and $x^2$ is $O(n^5)$ and $O(n^6)$.
Thus, simple combinations of ReGLU or SwiGLU layers give us a better approximation of $x$ and $x^2$, which in turn gives us a better approximation of $d^2$.
In practice, this may mean that $x$ need not be as small for reasonable approximations to hold which may allow for more stable gradients.

\subsection{Raw attention plots}
\label{sec:raw_attn_plots}

In Section \ref{sec:prot_distance}, we showed that the attention paid to positional and 3D distance are well-fit by Gaussians.
Here, in Figure \ref{fig:gaussian_fits_raw}, we provide the same plots but without isolating each of the factors.
Note that relative distance, position, and amino acid type are all correlated with one another, especially at linear/3D distance 0.

\begin{figure}
\begin{center}
\subfloat[Linear position, with coordinates]{
\includegraphics[width=0.15\linewidth]{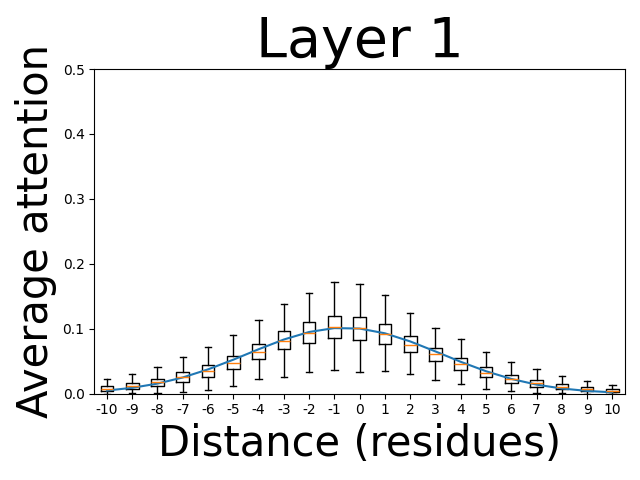}
\hfill
\includegraphics[width=0.15\linewidth]{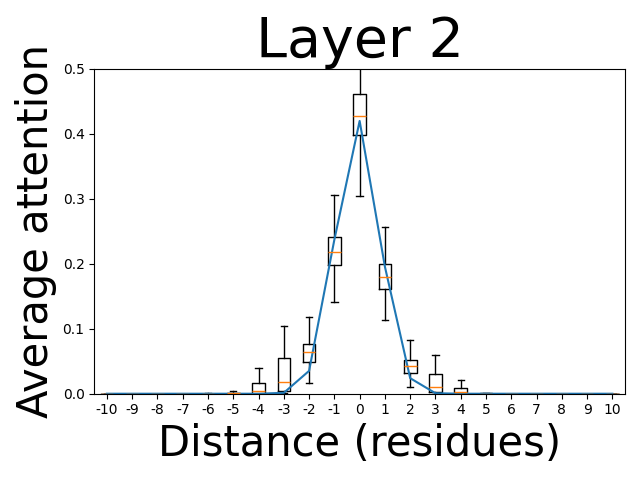}
\hfill
\includegraphics[width=0.15\linewidth]{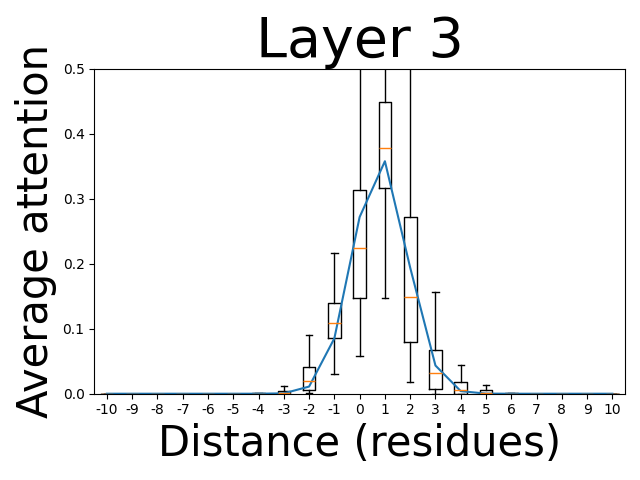}
\hfill
\includegraphics[width=0.15\linewidth]{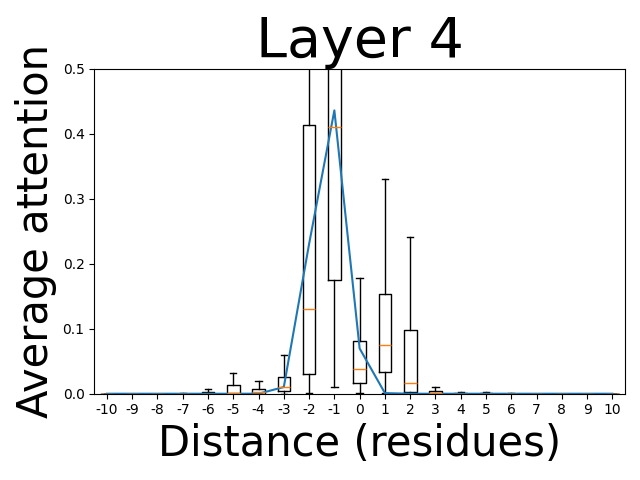}
\hfill
\includegraphics[width=0.15\linewidth]{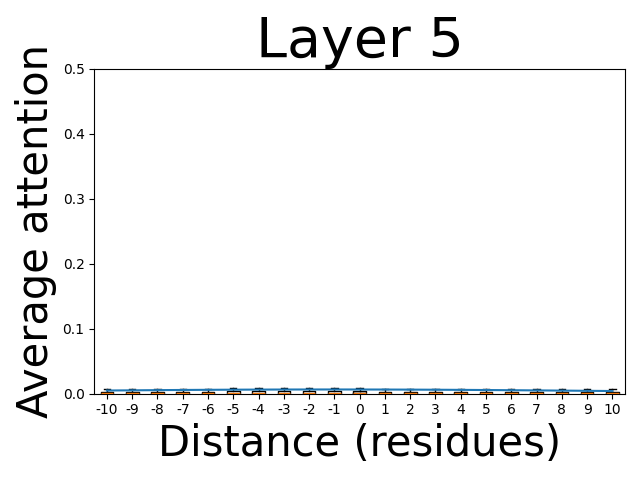}
\hfill
\includegraphics[width=0.15\linewidth]{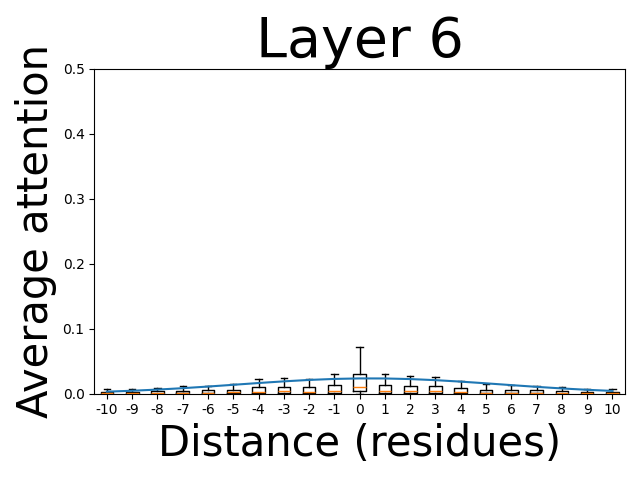}
}
\newline
\subfloat[3D position, with coordinates]{
\includegraphics[width=0.15\linewidth]{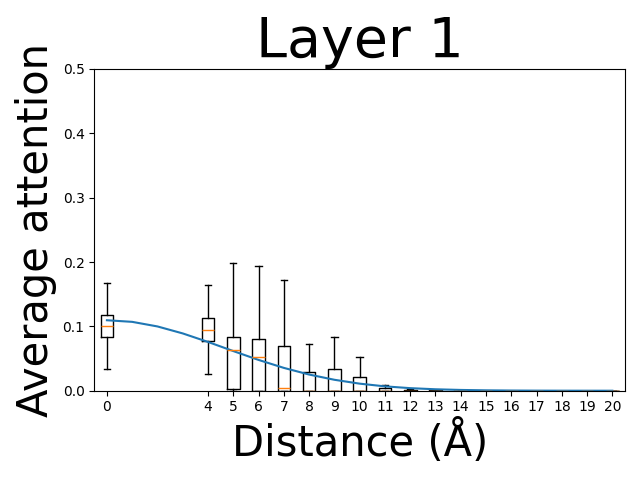}
\hfill
\includegraphics[width=0.15\linewidth]{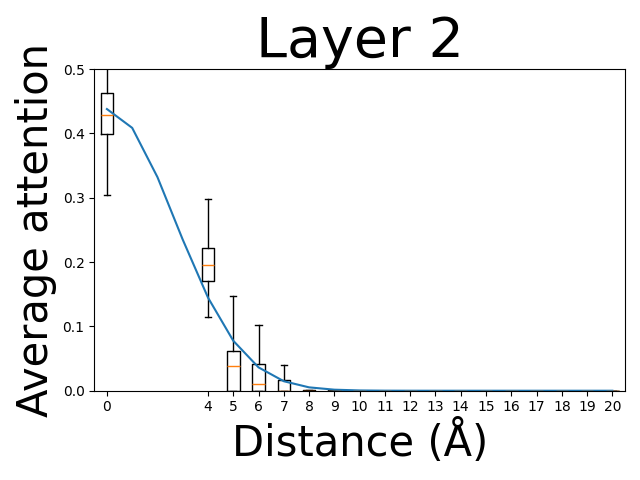}
\hfill
\includegraphics[width=0.15\linewidth]{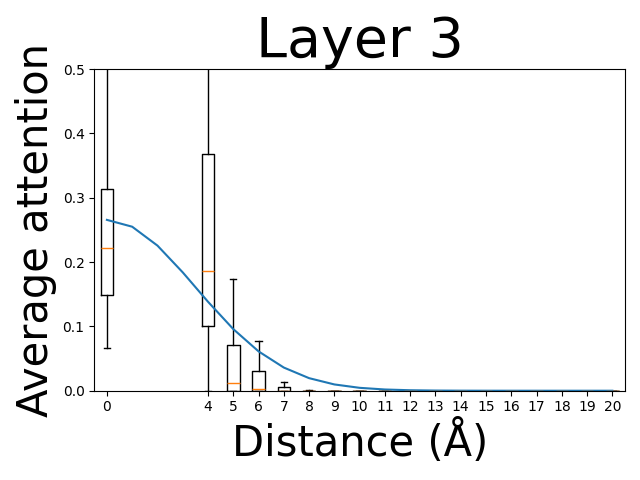}
\hfill
\includegraphics[width=0.15\linewidth]{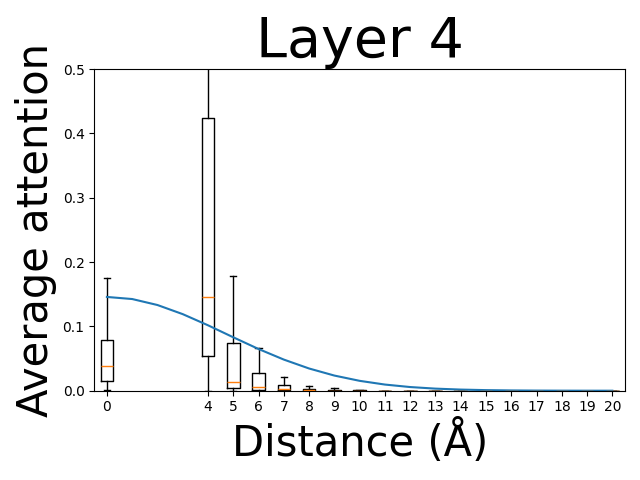}
\hfill
\includegraphics[width=0.15\linewidth]{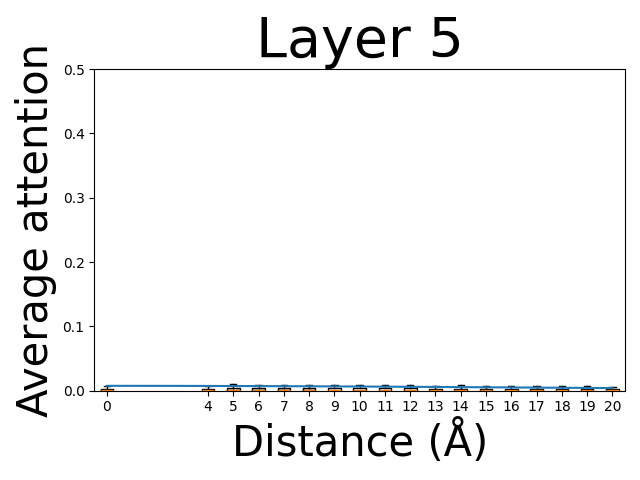}
\hfill
\includegraphics[width=0.15\linewidth]{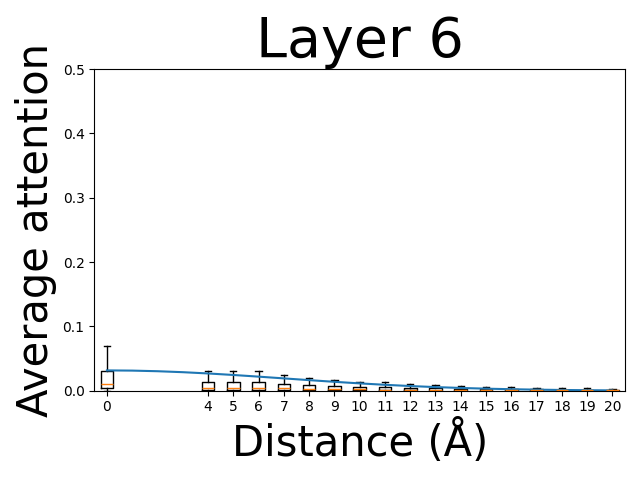}
}
\newline
\subfloat[Linear position, no coordinates]{
\includegraphics[width=0.15\linewidth]{figures/raw_attn_plots/coords_linear_layer_00_attentions.png}
\hfill
\includegraphics[width=0.15\linewidth]{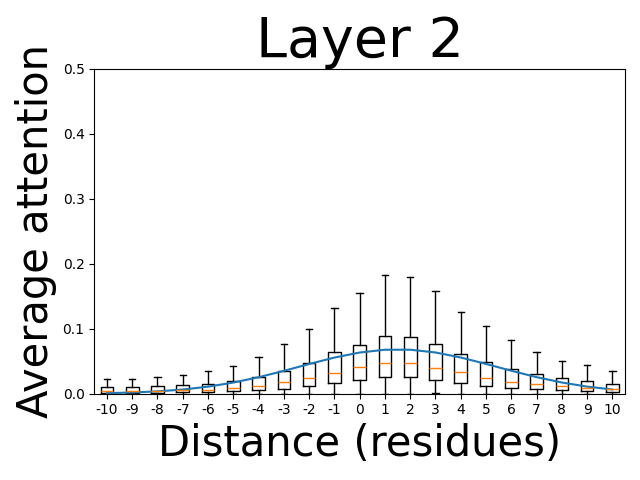}
\hfill
\includegraphics[width=0.15\linewidth]{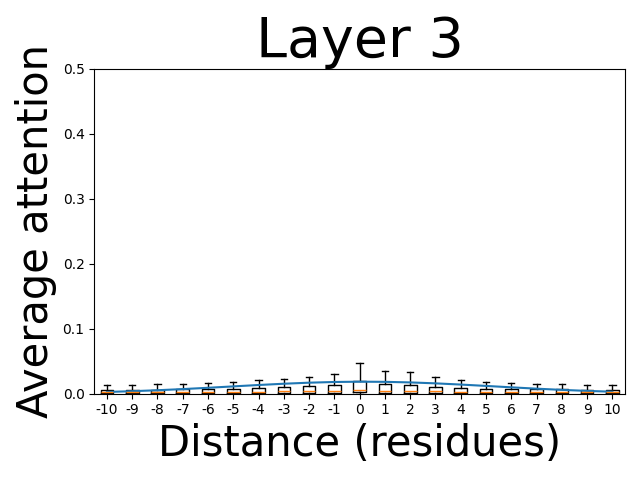}
\hfill
\includegraphics[width=0.15\linewidth]{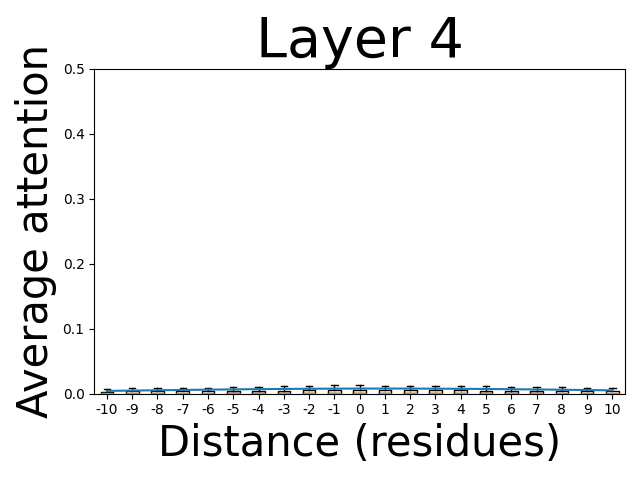}
\hfill
\includegraphics[width=0.15\linewidth]{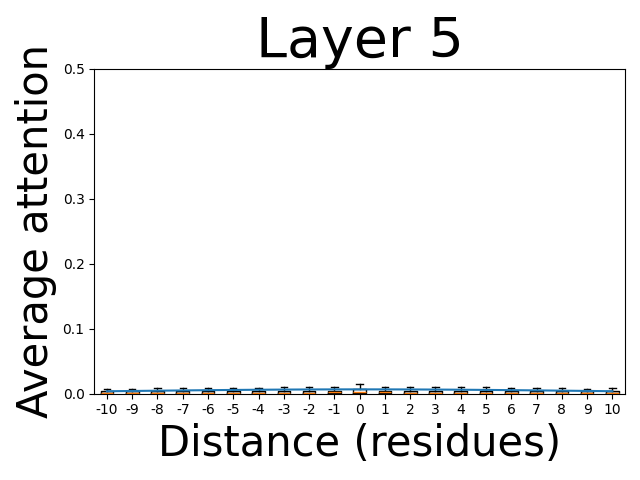}
\hfill
\includegraphics[width=0.15\linewidth]{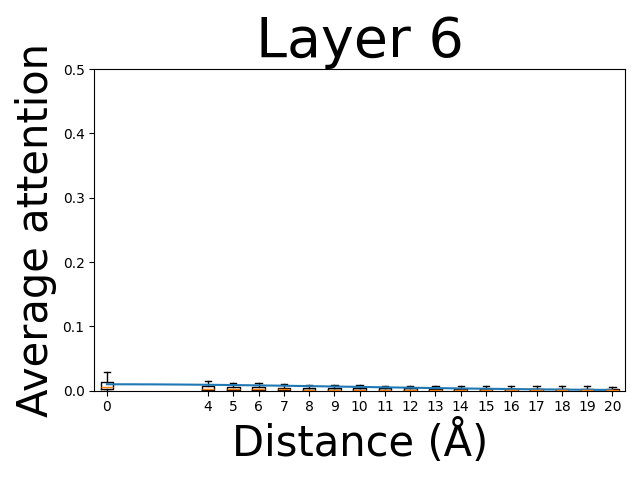}
}
\newline
\subfloat[3D position, no coordinates]{
\includegraphics[width=0.15\linewidth]{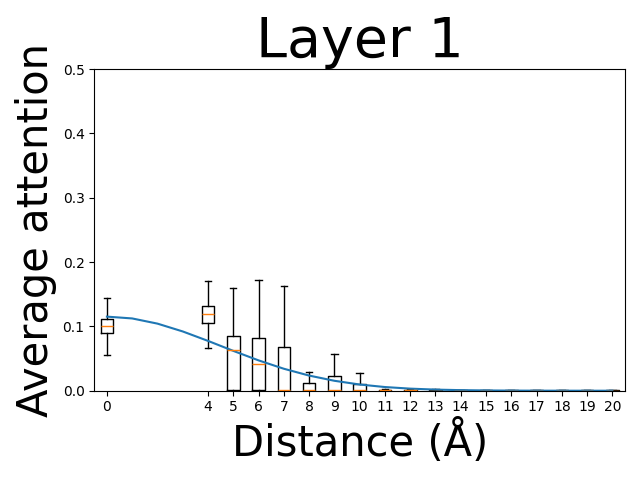}
\hfill
\includegraphics[width=0.15\linewidth]{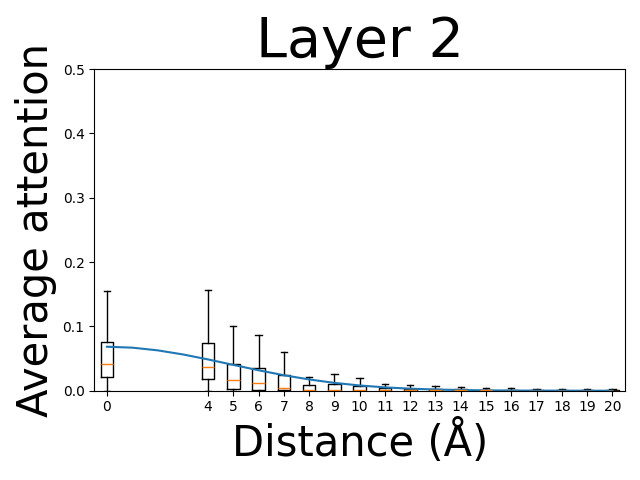}
\hfill
\includegraphics[width=0.15\linewidth]{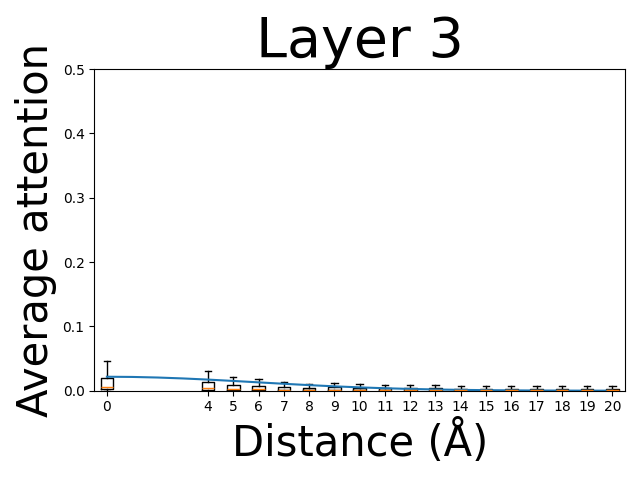}
\hfill
\includegraphics[width=0.15\linewidth]{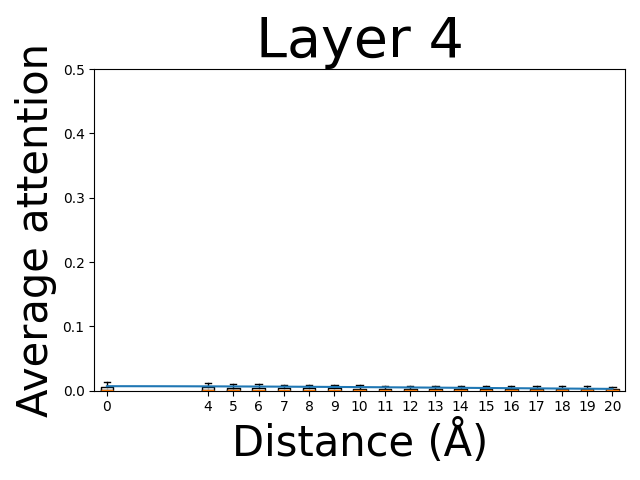}
\hfill
\includegraphics[width=0.15\linewidth]{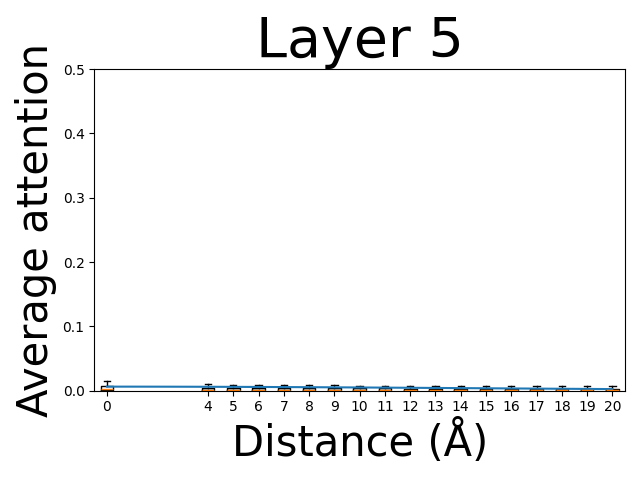}
\hfill
\includegraphics[width=0.15\linewidth]{figures/raw_attn_plots/no_coords_3d_layer_05_attentions.png}
}
\newline
\subfloat[Gaussian amplitudes]{
\centering
\includegraphics[width=0.45\linewidth]{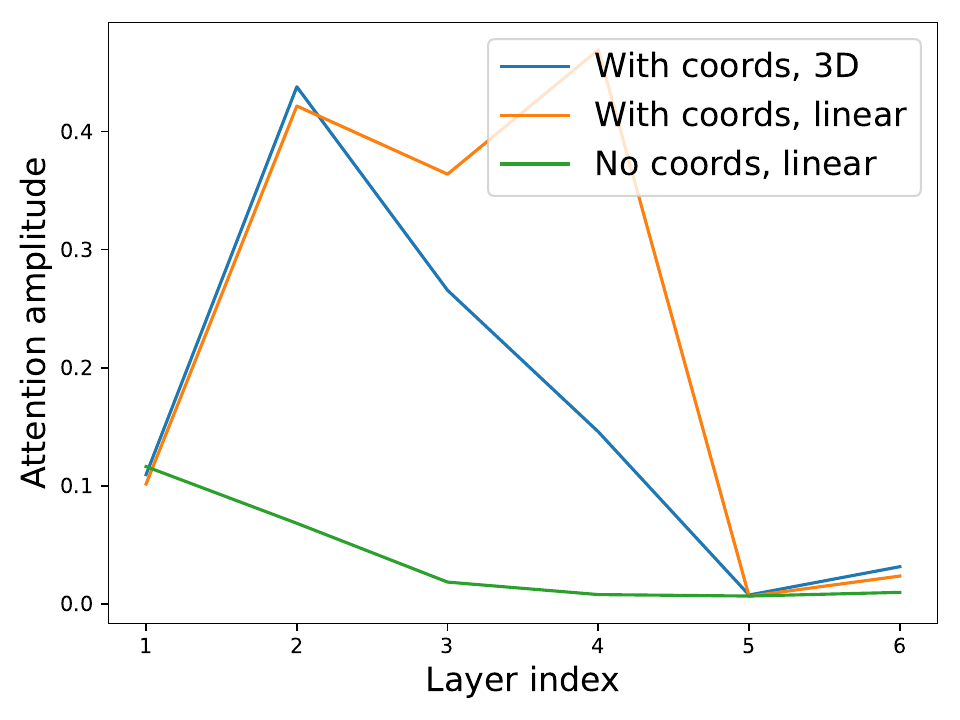}
}
\subfloat[Gaussian standard deviations]{
\centering
\includegraphics[width=0.45\linewidth]{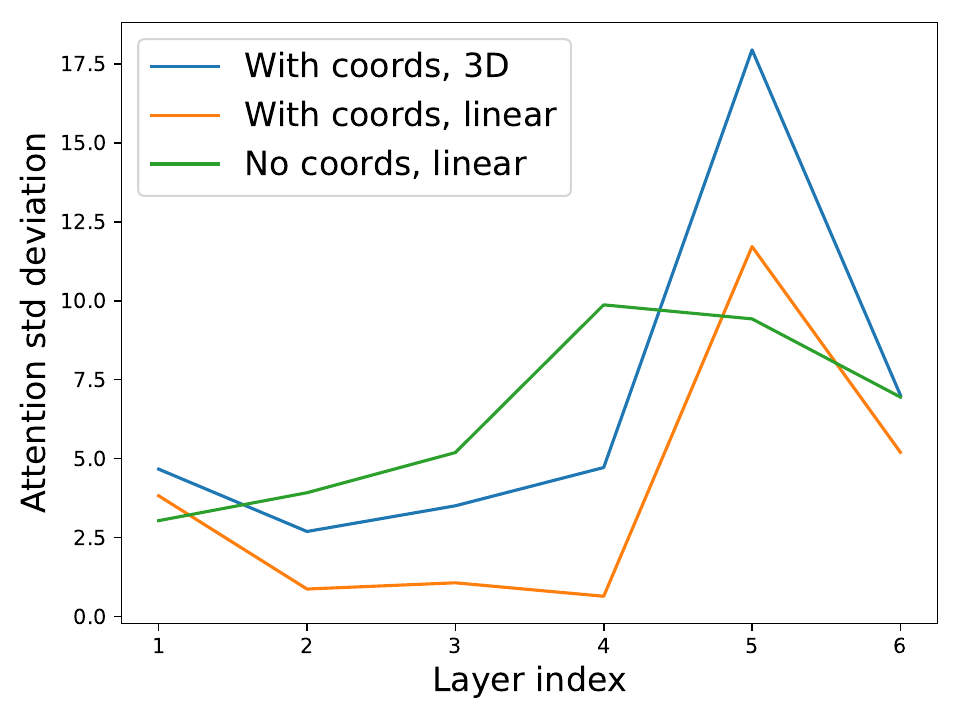}
}
\end{center}
\caption{Average attention paid per layer to linear and 3D positional information (a-d) for the unmodified inputs. Fit Gaussian functions are shown in blue for each plot. Amplitudes (e) and standard deviations (f) of the fit Gaussians are shown. The amplitudes are higher and the fits are worse due to the cross-correlation between relative position, 3D distance, and amino acid type.}
\label{fig:gaussian_fits_raw}
\end{figure}

\subsection{Extended experiments}
\label{sec:extra_experiments}

Here, we report the results of two additional experiments on predicting biological process (Table \ref{tab:bp}) and molecular function (Table \ref{tab:cc}) labels, also from the DeepFRI dataset.
We were unable to compare these results to DeepFRI because the PDB-only results were not reported.
As in the molecular function experiments, the inclusion of structure improved the performance of all models.
In these experiments, the performance of the MLP and finetuned Transformers were comparable, and the sequence-only MLP outperformed the sequence-only Transformer for cellular component prediction.
It is possible that this is the result of the more expressive Transformer model overfitting to sequence training data, which is then mitigated by the inclusion of structure.

\begin{table}[h]
    \centering
    \caption{GO biological process prediction results.}
    \begin{tabular}{llcllll}
        \toprule
         \multirow{2}{*}{Pretraining ({\#} seqs)} & \multirow{2}{*}{Method} & \multirow{2}{*}{Structure} & \multirow{2}{*}{AUPRC} & (Gain from & \multirow{2}{*}{Max F1} & (Gain from \\
         &&&& structure) & & structure)\\\midrule
         \multirow{4}{*}{Ours ($\sim$35K)} &\multirow{2}{*}{MLP} & \xmark & 0.197 && 0.247 & \\
         & & \cmark & 0.235 & 0.038 & 0.286 & 0.039 \\\cmidrule(lr){2-7}
         & \multirow{2}{*}{Finetuned} & \xmark & 0.191 && 0.232 & \\
         & & \cmark & 0.244 & 0.053 & 0.280 & 0.048 \\
         \bottomrule
    \end{tabular}
    \label{tab:bp}
\end{table}

\begin{table}[h]
    \centering
    \caption{GO cellular component prediction results.}
    \begin{tabular}{llcllll}
        \toprule
         \multirow{2}{*}{Pretraining ({\#} seqs)} & \multirow{2}{*}{Method} & \multirow{2}{*}{Structure} & \multirow{2}{*}{AUPRC} & (Gain from & \multirow{2}{*}{Max F1} & (Gain from \\
         &&&& structure) & & structure)\\\midrule
         \multirow{4}{*}{Ours ($\sim$35K)} &\multirow{2}{*}{MLP} & \xmark & 0.281 && 0.335 & \\
         & & \cmark & 0.306 & 0.025 & 0.350 & 0.015 \\\cmidrule(lr){2-7}
         & \multirow{2}{*}{Finetuned} & \xmark & 0.230 && 0.271 & \\
         & & \cmark & 0.308 & 0.078 & 0.343 & 0.062 \\
         \bottomrule
    \end{tabular}
    \label{tab:cc}
\end{table}

\subsection{Model parameter counts}

In Table \ref{tab:params}, we list the parameter counts for all models used in the paper.
The simulated parameter counts will vary slightly as described in the details of each experiment.
The pretrained models have the same number of parameters both with and without coordinates.
The finetuned models have a slightly higher number of parameters than the pretrained because of the final linear layer which projects to the number of classes.
The MLP parameter counts are relatively low because they are conditioned on the (fixed) pretrained embeddings.

\begin{table}[h]
    \centering
    \caption{Parameter counts for all models.}
    \begin{tabular}{c|c}
    \toprule
         Model name & count  \\
         \midrule
         Simulated model & 1,597,504 \\
         Pretrained models & 33,129,242 \\
         Finetuned models (cc) & 33,375,322 \\
         Finetuned models (mf) & 33,505,283 \\
         Finetuned models (bp) & 34,623,409 \\
         MLP models (cc) & 2,169,152 \\
         MLP models (mf) & 2,342,377 \\
         MLP models (bp) & 3,832,727 \\
         \bottomrule
    \end{tabular}
    \label{tab:params}
\end{table}

\end{document}